\let\iint\relax
\theoremstyle{plain}
\newtheorem{theorem}{Theorem}[section]
\newtheorem{lemma}[theorem]{Lemma}
\newtheorem{corollary}[theorem]{Corollary}
\theoremstyle{definition}
\newtheorem{definition}[theorem]{Definition}
\theoremstyle{remark}
\icmltitlerunning{Field Matching: an Electrostatic Paradigm to Generate and Transfer Data}
\begin{document}

\twocolumn[
\icmltitle{Field Matching: an Electrostatic Paradigm to Generate and Transfer Data}



\icmlsetsymbol{equal}{*}

\begin{icmlauthorlist}
\icmlauthor{Alexander Kolesov}{equal,skoltech,airi}
\icmlauthor{Stepan I. Manukhov}{equal,skoltech,msu}
\icmlauthor{Vladimir V. Palyulin}{skoltech}
\icmlauthor{Alexander Korotin}{skoltech,airi}
\end{icmlauthorlist}
\icmlaffiliation{airi}{Artificial Intelligence Research Institute, Moscow, Russia}
\icmlaffiliation{skoltech}{Skolkovo Institute of Science and Technology, Moscow, Russia}
\icmlaffiliation{msu}{Lomonosov Moscow State University, Faculty of Physics, Moscow, Russia}

\icmlcorrespondingauthor{Alexander Kolesov}{a.kolesov@skoltech.ru}

\icmlkeywords{Machine Learning, ICML}

\vskip 0.3in
]



\printAffiliationsAndNotice{\icmlEqualContribution} 

\begin{abstract}
We propose Electrostatic Field Matching (EFM), a novel method that is suitable for both generative modeling and distribution transfer tasks. Our approach is inspired by the physics of an electrical capacitor. We place source and target distributions on the capacitor plates and assign them positive and negative charges, respectively. Then we learn the electrostatic field of the capacitor using a neural network approximator. To map the distributions to each other, we start at one plate of the capacitor and move the samples along the learned electrostatic field lines until they reach the other plate. We theoretically justify that this approach provably yields the distribution transfer. In practice, we demonstrate the performance of our EFM in toy and image data experiments. Our code is available at \url{https://github.com/justkolesov/FieldMatching}.


\end{abstract}

\section{Introduction}
\label{introduction}
The basic task of generative modeling is to learn a transformation between two distributions accessible by i.i.d. samples. The typical scenarios considered are \textbf{noise-to-data} \cite{goodfellow2014generative} and \textbf{data-to-data} \cite{zhu2017unpaired}. These are usually referred to as the unconditional data generation and data translation, respectively.

Physics is often at the heart of the principles of generative modeling. One of the first attempts to link generative models and physics was made in Energy-Based models \citep[EBM]{lecun2005loss}. They parameterize data distributions using the
Gibbs-Boltzmann distribution density and generate data through simulation of Langevin dynamics \cite{du2019implicit, song2021train}.

\textbf{Diffusion Models} \citep[DM]{sohl2015deep, ho2020denoising} is a popular class of generative models which is inspired by the \textit{nonequilibrium thermodynamics}. The diffusion models consist of forward and backward stochastic processes \cite{song2021score}. The forward process corrupts the data via injection of Gaussian noise; the backward process reverses the forward process and recovers the data. 

\textbf{Poisson Flow Generative Models} \citep[PFGM]{xu2022poissonflowgenerativemodels,xu2023pfgm++} use ideas from the \textit{electrostatic} theory for the data generation process, recovering an electric field between a hyperplane of the data and a hemisphere of large radius.

Both DM and PFGM use physical principles to corrupt data, simplifying the data distribution to a tractable one. As a result, they are only used directly for \textbf{noise-to-data} tasks.

\begin{figure}[t]
\vskip 0.2in
\begin{center}
\centerline{\includegraphics[width=\columnwidth]{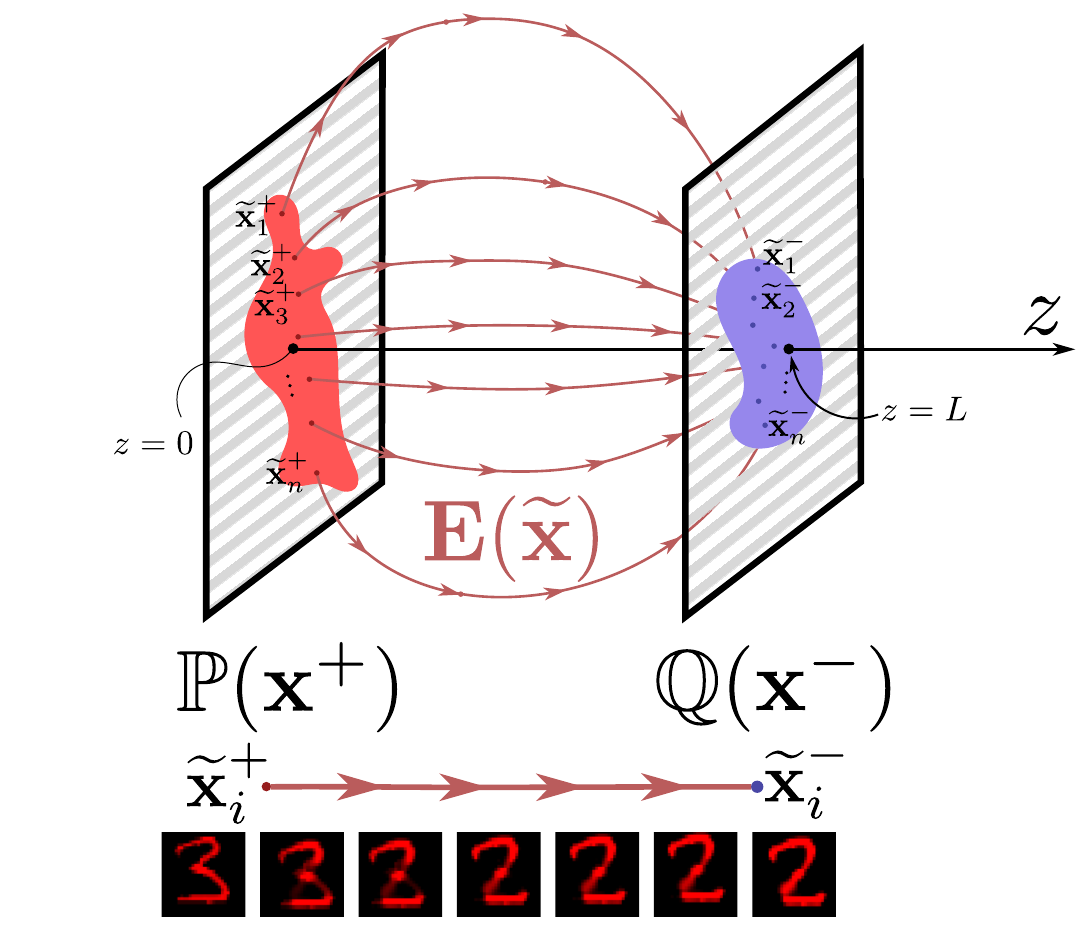}}
\caption{Our Electrostatic field matching (EFM) method. Two data distributions $\mathbb{P}(\textbf{x}^{+})$ and $\mathbb{Q}(\textbf{x}^-),\; \textbf{x}^\pm\in\mathbb{R}^D$ are placed in the space $\mathbb{R}^{D+1}$ in the planes $z=0$ and $z=L$, respectively. The distribution $ \mathbb{P}(\textbf{x}^{+})$ is assigned a positive charge, and the distribution $\mathbb{Q}(\textbf{x}^-)$ -- a negative charge. These charges create an electric field ${\textbf{E}}(\widetilde{\textbf{x}})$, where $\widetilde{\textbf{x}} = (\textbf{x}, z)\in\mathbb{R}^{D+1}$. The lines of the field begin at positive charges and end at negative charges. Movement along the electric field lines provably (see our Theorem \ref{main_theorem}) transforms the distribution $\mathbb{P}(\textbf{x}^{+})$ into the distribution $\mathbb{Q}(\textbf{x}^-)$.}
\label{electro_bridge1}
\vspace{-6 mm}
\end{center}
\vskip -0.2in
\end{figure}

Recently, modifications of DM have appeared that can learn diffusion in a data-to-data scenario. Diffusion Bridge Matching \citep[BM]{shi2024diffusion, albergo2023building,  Peluchetti} is an SDE-based method that recovers the continuous-time Markovian process between data distributions. Flow matching \citep[FM]{lipmanflow,liu2022flow,klein2024equivariant,chen2024flow,xie2024reflected} is the limiting case of BM that learns ODE-based transformation between distributions.

 
However, there is no method based on electrostatic theory that can be applied to \textbf{data-to-data} translation tasks.

\textbf{Contributions.} We propose and theoretically justify a new paradigm for generative modeling called \textit{Electrostatic Field Matching} (EFM). It is based on the electrostatic theory and suitable for both noise-to-data and data-to-data generative scenarios. We provide proof-of-concept experiments on low- and high-dimensional generative modeling tasks.


\section{Background and Related Works}
\label{background} 

\subsection{Basic physics}
\label{base_phys}

To understand the physics behind the electrostatic field matching method, let us recall some basic background from standard Maxwell's 3D-electrostatics and then generalize it to the case of $D$ dimensions. Information on Maxwell's electrostatics can be found in any electricity textbook, for instance \citep[Chapter 5]{LandauLifshitz2}.

\subsubsection{Maxwell's electrostatics\footnote{ All formulas are written in the Heaviside–Lorentz system of units, where Planck's constant $\hbar = 1$, the speed of light $c = 1$, and the electric constant, which stands as a multiplier in Coulomb's law (see (\ref{3D_field})), is $k = 1/(4\pi)$. This system of units is convenient for our purposes because 
formulas look particularly simple in this system of units (the Gauss's theorem and the circulation theorem).}}

\textbf{The field of a point charge. } Let a point charge $q\in\mathbb{R}$ be located at a point $\textbf{x}'\in\mathbb{R}^3$. At a point $\textbf{x}\in\mathbb{R}^3$ it creates an electric field\footnote{The meaning of electric field is as follows. If a charge $q_0$ is placed in an electric field, then the force acting on $q_0$ equals to $\textbf{F} = q_0\textbf{E}$. Using  (\ref{3D_field}), we obtain Coulomb's law of interaction of point charges:
$\textbf{F} = k\frac{qq_0}{||\textbf{x} - \textbf{x}'||^3}(\textbf{x} - \textbf{x}')$, where $k = \frac{1}{4\pi}$. } $\textbf{E}(\textbf{x})\in\mathbb{R}^3$ equal to:
\vspace{-2mm}
\begin{equation}
    \textbf{E}(\textbf{x}) = \frac{q}{4\pi}\frac{\textbf{x} - \textbf{x}'}{||\textbf{x} - \textbf{x}'||^3}.
    \label{3D_field}
\end{equation}

\textbf{The superposition principle.} If point charges $q_1, q_2, ..., q_N$ are located at points $\textbf{x}_1, \textbf{x}_2, ..., \textbf{x}_N$, they create independent fields $\textbf{E}_1(\textbf{x}), \textbf{E}_2(\textbf{x}), ..., \textbf{E}_N(\textbf{x})$ at a given point $\textbf{x}\in \mathbb{R}^3$. All these charges together produce the following field:
\vspace{-1mm}
\begin{equation}
    \label{3D_superposition_discrete}
        \textbf{E}(\textbf{x}) = \sum_{n = 1}^N\textbf{E}_n(\textbf{x}) = \sum_{n=1}^N \frac{q_n}{4\pi}\frac{(\textbf{x} - \textbf{x}_n)}{||\textbf{x} - \textbf{x}_n||^3}, 
\end{equation}

In the general case, we are dealing with a continuously distributed charge $q(\textbf{x})$. Then the superposition principle can be written as:
\begin{equation}
\label{3D_superposition}
        \textbf{E}(\textbf{x}) = \int \frac{1}{4\pi}\frac{(\textbf{x} - \textbf{x}')}{||\textbf{x} - \textbf{x}'||^3}q(\textbf{x}')d\textbf{x}'. 
\end{equation}






The charge distribution $q(\textbf{x})$ can have values greater than zero (positive charge) or less than zero (negative charge). 

\textbf{An electric field strength line} is a curve $\textbf{x}(t)\in\mathbb{R}^3,\;t\in[a,b]\subset\mathbb{R}$ whose tangent to each point is parallel to the electric field at that point. In other words:
    \begin{equation}
        \frac{d\textbf{x}(t)}{dt} = \textbf{E}(\textbf{x}(t)), \text{where }t\in[a,b]\subset\mathbb{R}.
        \label{3D_electric_line}
    \end{equation}

Electric field lines are the key concept in our work. The basic properties of these lines follow from Gauss's theorem and the circulation theorem formulated below. The rigorous formulations and proofs of \underline{the field line properties} are given in Appendix \ref{ND_lines} (for a more general $D$-dimentional case). 

\textbf{The electric field flux}. Consider an element of area $\textbf{dS}$. It is a \textit{vector} whose length is equal to the considered area $dS$ and whose direction is orthogonal to this area. For closed surfaces, the direction is always selected outward. The electric field flux $\textbf{E}$ through this element is $d\Phi = \textbf{E}\cdot\textbf{dS}$, i.e., the inner product between vectors $\textbf{E}$ and $\textbf{dS}$. The field flux through a finite surface $\Sigma$ is given by
\begin{equation}
        \Phi = \int_{\Sigma} d\Phi = \iint_\Sigma \mathbf{E}\cdot\mathbf{dS}.
\end{equation}
It indicates the density of field lines passing through $\Sigma$.


\textbf{Gauss's theorem} \citep[\wasyparagraph 31]{LandauLifshitz2}. For any closed two-dimensional surface $\partial M$, which bounds the set $M\subset \mathbb{R}^3$ (see Fig. \ref{fig_gauss}), the electric field flux is equal to the total charge enclosed by this surface:
    \begin{equation}
        \iint_{\partial M} \textbf{E}\cdot\textbf{dS} = \int_M q(\textbf{x})d\textbf{x}.
        \label{3D_gauss}
    \end{equation}

\begin{figure}[ht]
\begin{center}
\centerline{\includegraphics[width=\columnwidth]{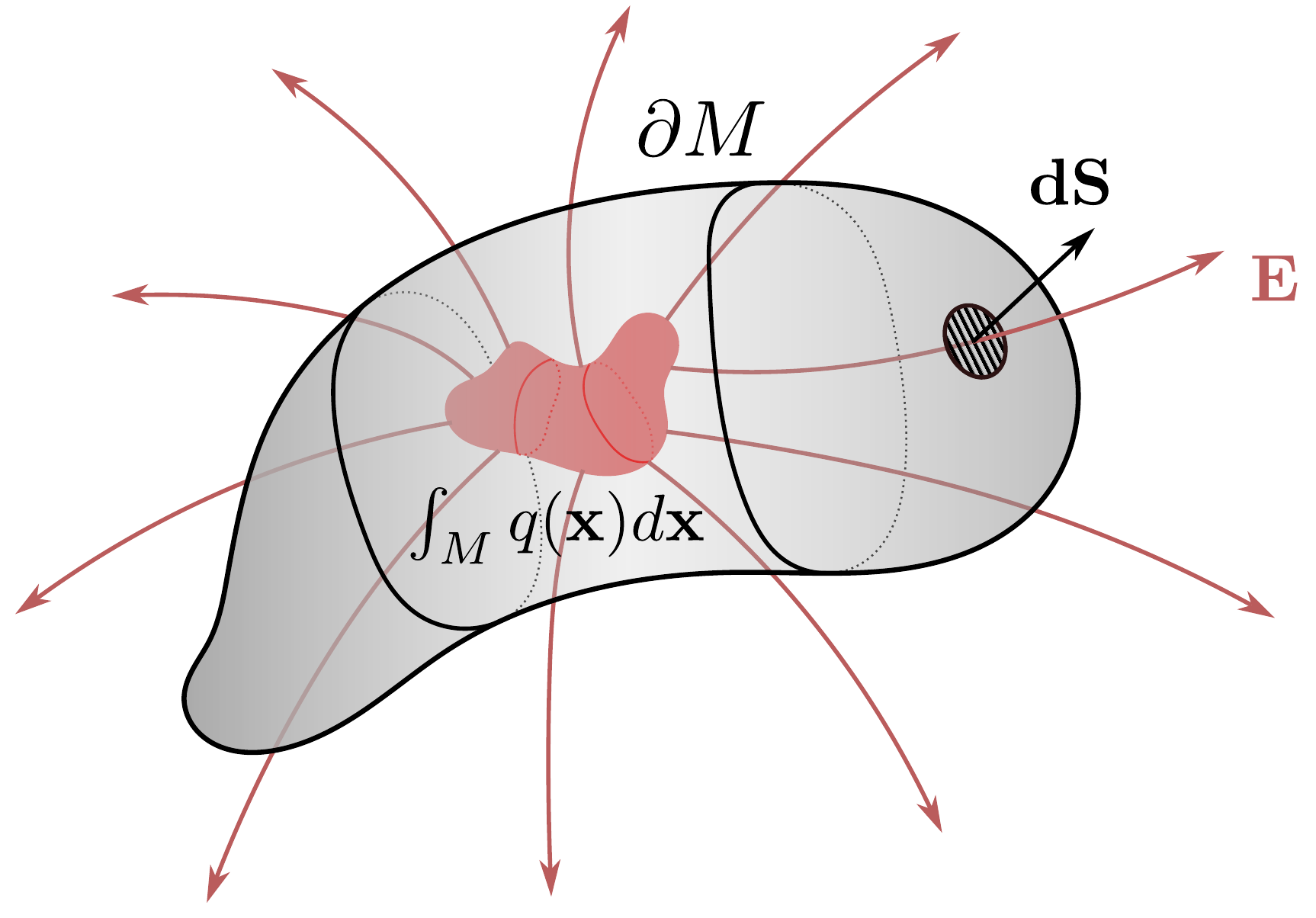}}
\caption{An illustration of the Gauss's theorem.}
\label{fig_gauss}
\end{center}
\vskip -0.1in
\end{figure}
\vspace{-2mm}
    Intuitively, the Gauss's theorem states that the number of lines passing through any closed surface is determined only by the charge inside that surface (and is proportional to it).

In particular, it follows from Gauss's theorem that the electric field line must begin at a positive charge (or at infinity) and end at a negative charge (or at infinity). \textit{Lines cannot simply terminate in a space where there are no charges}, see Lemma \ref{lemma_constant_flux} and Corollary \ref{corollary_no_drop_in_empty_space} for extended discussions.

\textbf{Theorem on the electric field circulation} \citep[\wasyparagraph 26]{LandauLifshitz2}. For any closed loop $\ell$ (Fig. \ref{fig_circulation}) the electric field circulation is equal to zero:
    \begin{equation}
        \oint_{\ell} \textbf{E}\cdot \textbf{dl} = 0,
        \label{3D_circ}
    \end{equation}
where $\textbf{dl}$ is the length element of the closed loop $\ell$. This length element is a \textit{vector} whose length is determined by the infinitesimal segment $\textbf{dl}$, see Fig. \ref{fig_circulation}, whose direction is the tangent at a given point to the curve $\ell$.
    
\begin{figure}[ht]
\begin{center}
\centerline{\includegraphics[width=\columnwidth]{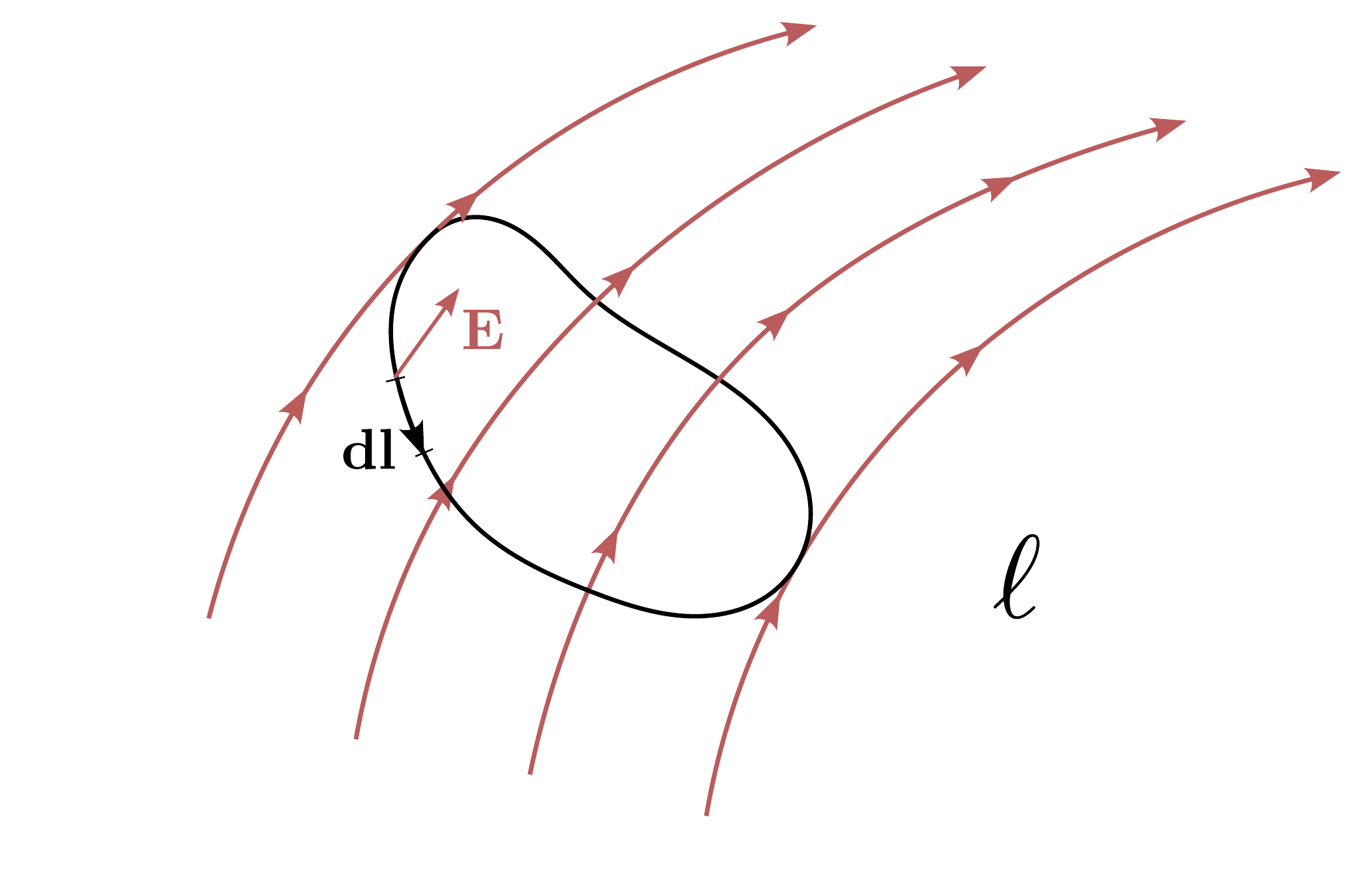}}
\vspace{-6mm}
\caption{An illustration of the electric field circulation theorem.}
\label{fig_circulation}
\end{center}
\vspace{-5mm}
\end{figure}
It follows from the circulation theorem that \textit{there are no field lines which form closed loops}, see Lemma \ref{no_loops_lemma}.

\subsubsection{$D$-dimensional electrostatics}
\label{ND_electrostatics}

The generalization of electrostatic equations for higher dimensions appears in discussions related to the influence of extra dimensions on physics \cite{Ehrenfest17,GUREVICH1971201,caruso2023still}. The generalization modifies equations (\ref{3D_gauss}) and (\ref{3D_circ}) by replacing $\mathbb{R}^3$ with $\mathbb{R}^D$ and replacing dimensionality 2 of $\partial M$ with $D-1$ in the Gauss's theorem. The definitions in (\ref{3D_electric_line}) and the superposition principle remain unchanged. The differences affect only the explicit expression for the electric field.

The electric field at the point $\textbf{x}\in \mathbb{R}^D$ of a point charge $q$, which is located at $\textbf{x}'\in\mathbb{R}^D$ equals to:
\vspace{-2mm}
\begin{equation}
    \textbf{E}(\textbf{x}) = \frac{q}{S_{D-1}}\frac{\textbf{x} - \textbf{x}'}{||\textbf{x} - \textbf{x}'||^{D}}
    \label{ND_field},
\end{equation}
where $S_{D-1}$ is the surface area of an $(D-1)$-dimensional sphere with radius $1$. 

The field of a distributed charge $q(\textbf{x})$ can be obtained by the principle of superposition as in (\ref{3D_superposition}) for 3D case:
\begin{equation}
    \textbf{E}(\textbf{x}) = \int \frac{1}{S_{D-1}}\frac{\textbf{x} - \textbf{x}'}{||\textbf{x} - \textbf{x}'||^{D}}q(\textbf{x}') d\textbf{x}'.
    \label{ND_field_continious}
\end{equation}

The Gauss's theorem and the circulation theorem in a $D$-dimensional space together ensure the following \textbf{principal characteristics} of electric field lines: 

\textbf{(i)} Electric field lines cannot terminate in points where there are no charges;

\textbf{(ii)} For a system having zero total charge ($\int q(\textbf{x})d\textbf{x} = 0$), electric field lines almost surely\footnote{ The term ``almost surely'' indicates that for a randomly selected point from the first positively charged distribution, the probability that its electric field line terminates at the second negatively charged distribution (rather than at infinity) equals 1.} start at positive charge and end at negative charge; 

\textbf{(iii)} There are no electric field lines that form closed loops.

\color{black} \color{black}



    
\subsection{Poisson Flow Generative Model (PFGM)}
The first attempt to couple electrostatic theory and generative modeling is proposed by \cite{xu2022poissonflowgenerativemodels,xu2023pfgm++}. The authors work with a $D$-dimensional data distribution.  

\begin{figure*}[t]
\centering

\begin{subfigure}[b]{0.44\linewidth}
\centering
\includegraphics[width=0.995\linewidth]{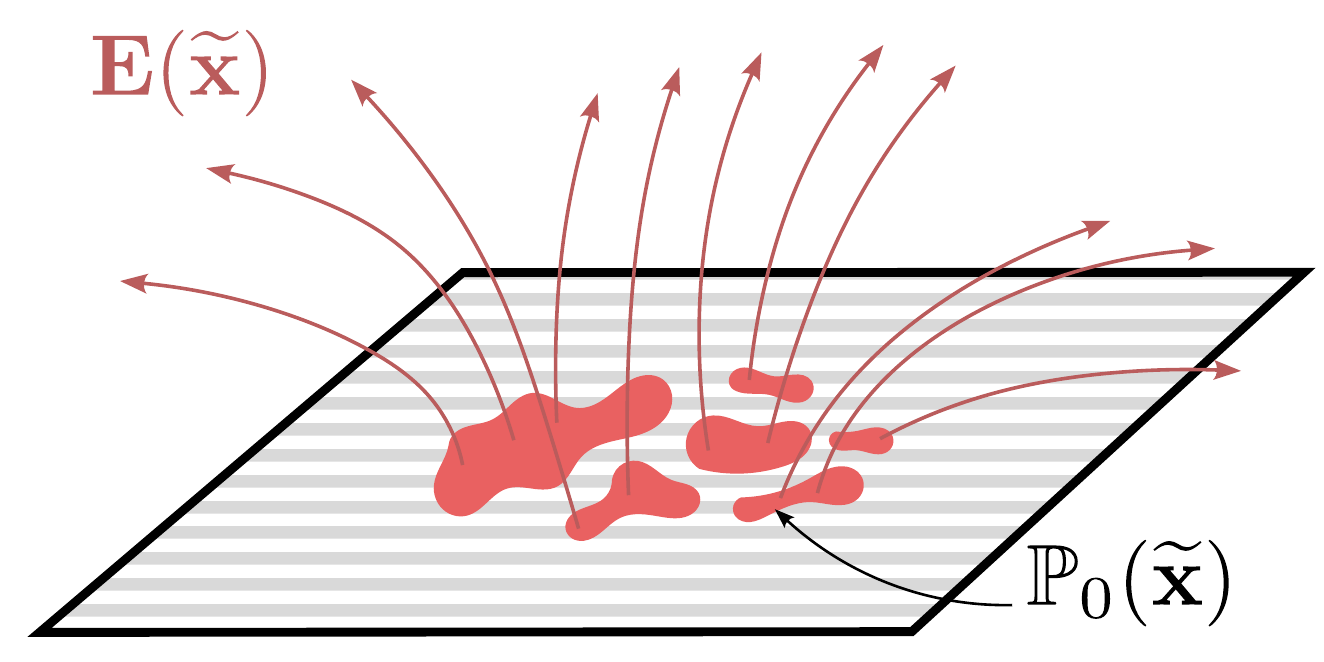}
\caption{\centering Near the plate.}
\label{fig:PFGM1}
\end{subfigure}
\begin{subfigure}[b]{0.442\linewidth}
\centering
\includegraphics[width=0.995\linewidth]{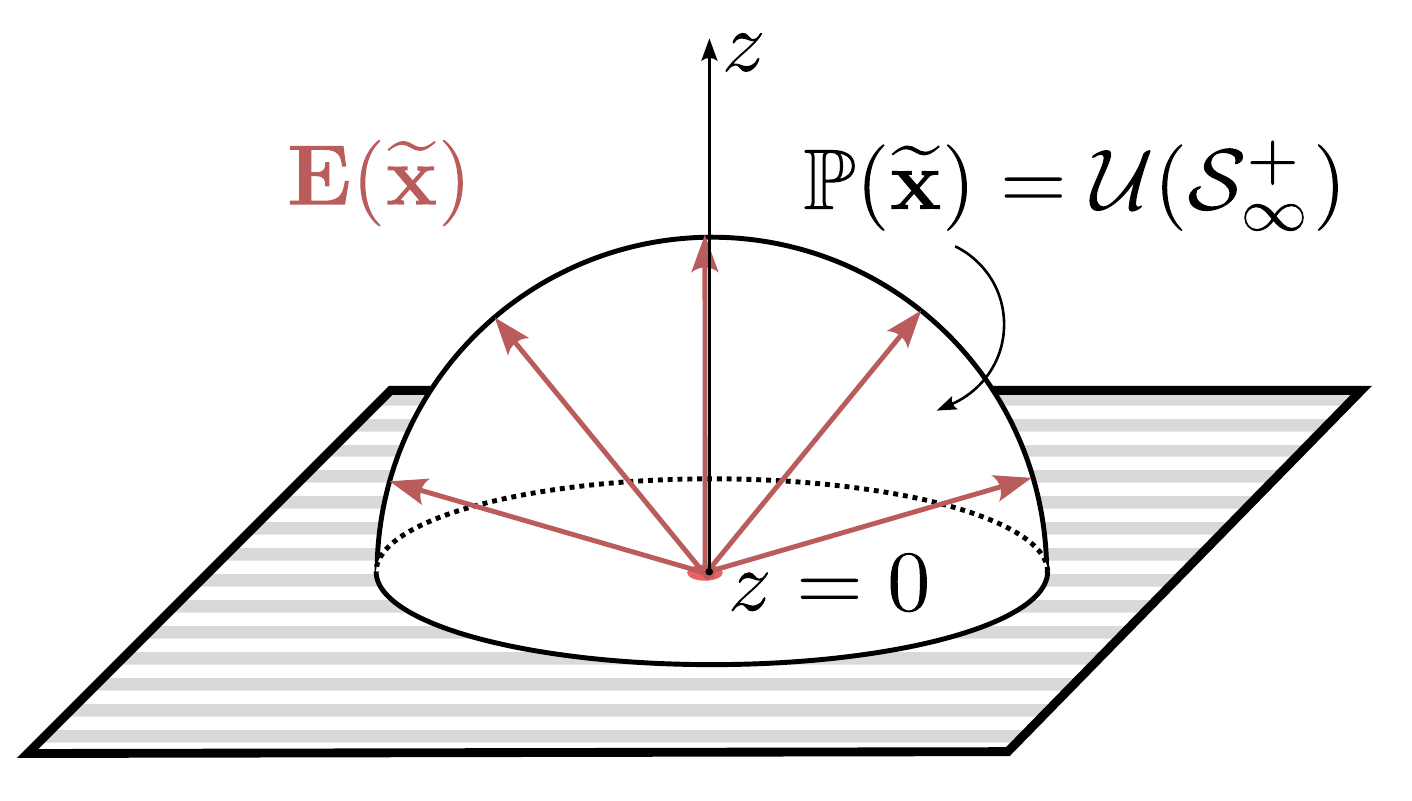}
\caption{\centering Away from the plate.}
\label{fig:PFGM2}
\end{subfigure}

\caption{\centering\small  PFGM concept. The original data have a distribution $\mathbb{P}_0(\widetilde{\textbf{x}})$, which is assigned a positive charge that produces an electric field $\textbf{E}(\widetilde{\textbf{x}})$. Near the plate (Fig. \ref{fig:PFGM1}), the field lines can have a complex structure, while away from the plate (Fig. \ref{fig:PFGM2}) the charge looks like a point, and therefore the electric field is uniformly distributed: $\widetilde{\textbf{x}} \sim \mathcal{U}(\mathcal{S}^+_{\infty})$.}
\vspace{-2mm}
\label{fig:translation}

\end{figure*}


They embed this distribution into ${(D+1)}$-dimensional space. The new point in this space can be written as 
\begin{equation}
\label{new_point_notation}
    (x_1, x_2,...,x_D,z) = (\textbf{x}, z) = \widetilde{\textbf{x}} \in \mathbb{R}^{D+1}.
\end{equation}
The data is then placed on the hyperplane $z=0$ at $\mathbb{R}^{D+1}$ by applying $\textbf{x} \to \widetilde{\textbf{x}} = (\textbf{x},0)$. The data distribution is interpreted then as a positive electrostatic charge distribution.



The intuition of the method is that the charged points $\widetilde{\textbf{x}}$ in the hyperplane $z=0$ generate the electric field $ \textbf{E}(\cdot)$ which behaves at infinity  as the field of a point charge. If a point charge is placed inside a sphere $\mathcal{S}_{\infty}$ with an infinite radius, then the flux density $\mathbb{P}_{\infty}(\cdot)$ through the surface of the sphere is distributed uniformly. For simplicity, the authors consider the hemisphere $\mathcal{S}^{+}_{\infty}$ (Fig.~\ref{fig:PFGM2}, upper half of  $\mathcal{S}_{\infty}$). 
The electric field lines define the correspondence between uniformly distributed charges on the surface of $\mathcal{S}^{+}_{\infty}$ and the data distribution $\mathbb{P}_{0}(\widetilde{\textbf{x}})$ located in the hyperplane $z=0$. 

If a massless point charge is placed in the electric field $ \textbf{E}(\cdot)$ with field lines directed from $\mathbb{P}_{0}(\cdot)$ to $\mathbb{P}_{\infty}(\cdot)$, then the charge moves along the lines to $\mathbb{P}_{\infty}(\cdot)$. This movement transforms the data samples from the complex distribution $\mathbb{P}_0(\cdot)$ into the simple distribution $\mathbb{P}_{\infty}(\cdot)$ on the hemisphere. The corresponding inverse transformation generates the data samples from uniformly distributed samples on the hemisphere. The inverse map is a movement along these field lines in the backward direction and is defined by the following ODE with electric field $-\textbf{E}(\cdot)$ being the velocity field:
\begin{equation}
\label{main_odde}
\frac{d\widetilde{\textbf{x}}(t)}{dt} = - \textbf{E}(\widetilde{\textbf{x}}(t)).  
\end{equation}
To recover the electric field $\bf{E}$$(\cdot)$ in the extended ${(D+1)}$-dimensional space, the authors propose to approximate it with a neural network $f_{\theta}(\cdot) : \mathbb{R}^{D+1} \to \mathbb{R}^{D+1}$. 

First, they compute the ground truth electric field $\textbf{E}(\widetilde{\textbf{x}})$ empirically at a set of arbitrary ${(D+1)}$-dimensional points $\widetilde{\textbf{x}}$ inside the hemisphere $\mathcal{S}^{+}_{\infty}$ through samples from $\mathbb{P}_{0}(\cdot)$ using (\ref{ND_field_continious}). Second, the electric field is learned at $\widetilde{\textbf{x}}$ by minimizing the difference between the predicted $f_{\theta}(\widetilde{\textbf{x}})$ and the ground-truth $\textbf{E}(\widetilde{\textbf{x}})$. Having learned the electric field $\textbf{E}(\cdot)$ in the ${(D+1)}$-dimensional space, they simulate ODE (\ref{main_odde}) with initial samples from $\mathbb{P}_{\infty}(\cdot)$ until the spatial  coordinate $z$ reaches 0. Finally, they get samples  $\widetilde{\textbf{x}}_{T}\sim \mathbb{P}_{0}(\cdot)$, where $T$ is the end time of the ODE simulation.
 

\section{Electrostatic Field Matching (EFM)}
\label{main} 

This section introduces Electrostatic Field Matching (EFM), a novel generative modeling paradigm applicable to \textbf{both} noise-to-data and data-to-data generation grounded in electrostatic theory. In \wasyparagraph\ref{EFM_into}, we give an intuitive description of the method. In \wasyparagraph\ref{EFM_theorem}, we give the theoretical foundation of the method and the main theorem. In \wasyparagraph\ref{prac_implementation},  we formulate the learning and inference algorithms of the EFM.

\subsection{Intuitive explanation of the method}
\label{EFM_into}


Our idea is to consider distributions as electric charge densities. One could assign positive charge values to the first distribution and negative charges to the second one, i.e., the charge density follows the distributions up to a sign. We then place these distributions on two $D$-dimensional planes at  distance $L$ from each other (Fig. \ref{electro_bridge1}). This will produce an electric field with lines starting at one density and finishing at another. We prove Theorem \ref{main_theorem} — our key result — which shows that movement along the lines guarantees an almost sure transition from one distribution to another.

\subsection{Formal theoretical justification}
\label{EFM_theorem}

Let $\mathbb{P}(\textbf{x}^{+})$ and $\mathbb{Q}(\textbf{x}^{-})$, with $\textbf{x}^{\pm} \in \mathbb{R}^D$, be two data distributions. We assign to the first distribution a positive charge $q^{+}(\textbf{x}^{+}) = \mathbb{P}(\textbf{x}^{+})$ and to the second distribution a negative charge $q^{-}(\textbf{x}^{-}) = -\mathbb{Q}(\textbf{x}^{-})$. Note that the charge distributions are normalized such that $\int q^{+}(\textbf{x}^{+}) d\textbf{x}^+ = 1$ and $\int q^{-}(\textbf{x}^{-}) d\textbf{x}^- = -1$, resulting in a total charge of zero.

We place these $q^+$ and $q^{-}$ in $(D+1)$-dimensional space. Each point in this space has the same form as in (\ref{new_point_notation}). More precisely, we place $q^+$ in the hyperplane $z=0$, and $q^-$ in $z = L$ (Fig. \ref{electro_bridge1}). One can think of it as a $(D+1)$-dimensional \textbf{capacitor}. The distributions can be written with Dirac delta function $\delta(\cdot)$ as:
 \begin{equation}
 \begin{split}
     q^+(\widetilde{\textbf{x}}) = q^+(\textbf{x}, z) = q^+(\textbf{x})\delta(z),\;\;\;\;\;\;\;\\ 
     q^-(\widetilde{\textbf{x}}) = q^-(\textbf{x}, z) = q^-(\textbf{x})\delta(z-L).
 \end{split}
 \end{equation}


The electric field produced at the point $\widetilde{\textbf{x}}\in \mathbb{R}^{D+1}$ in the space plates consists of two summands:
\begin{equation}
    \textbf{E}(\widetilde{\textbf{x}}) = \textbf{E}_{+}(\widetilde{\textbf{x}}) + \textbf{E}_{-}(\widetilde{\textbf{x}}),
    \label{main_field_bridge}
\end{equation}
where $\textbf{E}_{+}(\textbf{x})$ and $\textbf{E}_{-}(\widetilde{\textbf{x}})$ are the fields created by $q^+(\widetilde{\textbf{x}}^+)$ and $q^-(\widetilde{\textbf{x}}^-)$, respectively.  The exact expression for these fields is given by (\ref{ND_field_continious}) with replacement of $D$ by $D+1$:
\begin{equation}
        \textbf{E}_\pm(\widetilde{\textbf{x}}) = \int \frac{1}{S_{D}}\frac{\widetilde{\textbf{x}} - \widetilde{\textbf{x}}'}{||\widetilde{\textbf{x}} - \widetilde{\textbf{x}}'||^{D+1}}q^\pm(\widetilde{\textbf{x}}') d\widetilde{\textbf{x}}'. 
\end{equation}


Consider the field lines originating from the plate $\mathbb{P}(\widetilde{\mathbf{x}}^+)$. Due to the total zero charge, these lines almost surely terminate on $\mathbb{Q}(\widetilde{\mathbf{x}}^-)$, as established in Lemma~\ref{lines_main_lemma}. The transport from $\mathbb{P}(\cdot)$ to $\mathbb{Q}(\cdot)$ can be initiated in two distinct directions: either toward (forward-oriented) or away from (backward-oriented) the target plate. While backward-oriented lines exhibit greater curvature and longer paths, they still terminate in the target distribution, see Figs.~\ref{appendixE1} and \ref{appendixE2}.

We define the \textit{stochastic} \textbf{forward map} $T_F$ from $\text{supp}(\mathbb{P})$ to $\text{supp}(\mathbb{Q})$ through forward-oriented field lines.\footnote{Here $\text{supp}(\cdot)$ denotes the support of a distribution.} For this, we consider a point $\widetilde{\textbf{x}}^+ = (\textbf{x}^+,\varepsilon), \varepsilon\to 0^+$ slightly shifted in the direction of the second plate.  Let us move along the corresponding field line by integrating $d\widetilde{\textbf{x}}(t) = \textbf{E}(\widetilde{\textbf{x}}(t))dt$ until coming to the second plate at a point $\widetilde{\textbf{x}}_{F}^- = (\textbf{x}_{F}^-,L)$. Next, two situations may arise (Fig. \ref{fig:traj_1}): 
\begin{enumerate}[leftmargin=*]
    \item The values of the field to the left and right of the plate $z=L$ have different signs. Here the field is directed toward the plate from two different sides.
    \item The values of the field to the left and right of the plate have the same signs on the left and right, i.e., the field line crosses $z=L$ and continues to the region $z>L$.\footnote{The movement in the opposite direction is impossible since the projection $E_z$ of the field is positive in the interval $z\!\in\!(0,L)$.}
\end{enumerate}

\begin{figure}[!h]
\begin{center}
\centerline{\includegraphics[width=\columnwidth]{T_F_map_def.png}}
\vspace{-3mm}
\caption{\centering An illustration of the forward map $T_{F}$.}
\label{fig:T_F_ill}
\end{center}
\vspace{-8mm}
\end{figure}
In the first case, the movement ends. In the second case, it is necessary to stop the movement at the point $\widetilde{\textbf{x}}_{F}^{-}$ with probability $\nu(\mathbf{x}_F^-)$ and continue the movement into the region $z>L$ with probability $1-\nu(\mathbf{x}_F^-)$, where
\begin{equation}
\label{nu_def}
    \nu(\textbf{x}_F^-) = 
\begin{cases}
    1,& \text{if }E_z^\pm({\textbf{x}}_F^-) \text{ have opposite signs}\\
    \frac{E_z^-({\textbf{x}}_F^-) - E_z^+({\textbf{x}}_F^-)}{E_z^-({\textbf{x}}_F^-)} & \text{if }E_z^\pm({\textbf{x}}_F^-) \text{ have same sign}
\end{cases}
\end{equation}
and $E_z^\pm({\textbf{x}}_F^-) = E_z(\widetilde{\mathbf{x}}_F^-\pm \varepsilon\mathbf{e}_z)$, $\varepsilon\to0^+$ denote the left and right limits of the field value at the point $\widetilde{\mathbf{x}}_F^-$ in $z$ direction. 
The latter field line also reaches the target distribution $\mathbb{Q}$, at a point $\widetilde{\textbf{x}}'^-_F$. Finally, we define the forward map as follows:
\begin{equation}
    T_F(\textbf{x}^+) = \begin{cases}
        \textbf{x}^-_F & \text{with probability }\nu(\mathbf{x}_F^-)\\
        \textbf{x}'^-_F & \text{with probability } 1-\nu(\mathbf{x}_F^-)
    \end{cases}.\label{fwd-map-def}
\end{equation}The probability $\nu(\textbf{x}_F^-)$ allows us to understand whether it is needed to stop at the first intersection point $\textbf{x}^-_F$, or continue moving to the next point $\textbf{x}'^-_F$ (see Fig. \ref{fig:T_F_ill}). In particular, the probability value is proportional to the electric field flux.

The \textit{stochastic} \textbf{backward map} $T_B$ is constructed similarly using left limit ${\varepsilon\to 0^-}$ and backward-oriented field lines.

The complete transport is described by the \textit{random variable}:
\begin{equation}
\label{map_T}
T(\mathbf{x}^+) = 
\begin{cases}
    T_F(\mathbf{x}^+) \text{ with probability } \mu(\textbf{x}^+),\\
    T_B(\mathbf{x}^+) \text{ with probability } 1 - \mu(\textbf{x}^+),
\end{cases}
\end{equation}
capturing both forward and backward trajectory endpoints for each $\mathbf{x}^+\in\text{supp}(\mathbb{P})$ on the left plate, where:
\begin{equation}
\label{mu_def}
    \mu(\textbf{x}^+) = 
\begin{cases}
    1,& \text{if }E_z^\pm(\textbf{x}^+) \text{ have same sign}\\
    \frac{E_z^+(\textbf{x}^+)}{E_z^+(\textbf{x}^+)+|E_z^-(\textbf{x}^+)|} & \text{if }E_z^\pm(\textbf{x}^+) \text{ have opp. signs}
\end{cases}
\end{equation}
The value $\mu(\textbf{x}^+)$ allows one to choose a forward or backward sets of lines with a probability proportional to the field flux in the corresponding direction (i.e., $E_z^+$ or $|E_z^-|$). At the same time, if it is impossible to move backward, i.e., $E_z^-\!>\!0$, the forward map $T_F$ is always chosen ($\mu(\textbf{x}^+) \!=\! 1$).


Then, for this stochastic map, we prove the next theorem:

\begin{theorem}[\textbf{Electrostatic Field Matching}]
\label{main_theorem}
    Let $\mathbb{P}(\textbf{x}^+)$ and $\mathbb{Q}(\textbf{x}^-)$ be two data distributions that have compact support. Let $\textbf{x}^+$ be distributed as $\mathbb{P}(\textbf{x}^+)$. Then $\textbf{x}^-=T(\textbf{x}^+)$ is distributed as $\mathbb{Q}(\textbf{x}^-)$ almost surely:
    \begin{equation}
        \text{If}\;\; \textbf{x}^+\sim \mathbb{P}(\textbf{x}^+) \Rightarrow T(\textbf{x}^+) = \textbf{x}^-\sim \mathbb{Q}(\textbf{x}^-).
    \end{equation}
\end{theorem}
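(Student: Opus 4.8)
The plan is to prove the claim by a flux-conservation argument: I would show that the probability mass the stochastic map $T$ deposits on an arbitrary measurable patch $B\subseteq\text{supp}(\mathbb{Q})$ of the target plate equals the electric flux that terminates (is absorbed) there, and then invoke Gauss's theorem (\ref{3D_gauss}) in its $(D+1)$-dimensional form to identify that absorbed flux with $\int_B\mathbb{Q}(\mathbf{x}^-)\,d\mathbf{x}^-$. Since $B$ is arbitrary, this gives $T_\#\mathbb{P}=\mathbb{Q}$, which is exactly the assertion.

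First I would set up two local Gauss identities by applying (\ref{3D_gauss}) to an infinitesimally thin pillbox straddling each plate. At the source plate $z=0$ this gives, for a patch $A$, the decomposition of its charge into the flux emitted on the two sides,
\begin{equation*}
\int_A \mathbb{P}(\mathbf{x}^+)\,d\mathbf{x}^+ = \int_A E_z^+(\mathbf{x}^+)\,d\mathbf{x}^+ - \int_A E_z^-(\mathbf{x}^+)\,d\mathbf{x}^+,
\end{equation*}
so that, in the sign regime where lines leave on both sides, the forward-emitted flux $E_z^+$ and the backward-emitted flux $|E_z^-|$ sum to the local density $\mathbb{P}$. The same pillbox computation at $z=L$ expresses $\int_B\mathbb{Q}$ as the \emph{net absorbed} flux $\int_B\big(E_z^-(\mathbf{x}^-)-E_z^+(\mathbf{x}^-)\big)$ (what enters from below minus what leaves above). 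These are precisely the quantities against which the weights $\mu$ in (\ref{mu_def}) and $\nu$ in (\ref{nu_def}) are normalised, which is the structural reason the map is built the way it is: $\mu$ splits the source mass of $A$ into its forward and backward flux fractions, and $\nu$ splits an arriving flux tube at $z=L$ into an absorbed fraction and a transmitted fraction.

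Second, I would invoke flux conservation along field lines in the charge-free slabs $z<0$, $0<z<L$, $z>L$, where $\mathbf E$ is divergence-free; by Lemma~\ref{lemma_constant_flux} and Corollary~\ref{corollary_no_drop_in_empty_space} the flux through any cross-section of a tube of field lines is constant, so no flux is created or lost between the plates, and by Lemma~\ref{no_loops_lemma} no lines close up. Together with Lemma~\ref{lines_main_lemma}, which uses the zero-total-charge condition and compact support to guarantee that almost every line from $\mathbb{P}$ actually terminates on $\mathbb{Q}$, this lets me treat the forward- and backward-oriented line families as flux-preserving correspondences between source and target patches. The decisive step is then to check that the random selection rules reproduce $\mathbb{Q}$ exactly: choosing forward/backward with probability $\mu$ routes the source mass of $A$ along the two families in proportion to $E_z^+$ and $|E_z^-|$; the lines carry that flux unchanged to $z=L$; and the rule $\nu$ terminates each arriving tube at its first contact $\mathbf{x}_F^-$ with probability equal to the absorbed-flux fraction, letting the complementary transmitted fraction continue to a later contact $\mathbf{x}'^-_F$. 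Summing the terminating contributions over all lines reaching $B$ reconstitutes the total absorbed flux $\int_B\mathbb{Q}$.

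I expect the main obstacle to be exactly this last bookkeeping for lines that cross $z=L$ and re-enter: one must verify that the iterated "stop-or-continue" events weighted by $\nu$ telescope so that every unit of incoming flux is deposited on the target exactly once, with neither double counting nor leakage. Controlling the escaping lines is where the compact-support and zero-total-charge hypotheses are essential — via Lemma~\ref{lines_main_lemma} they force the "line escapes to $z\to\pm\infty$" event to have probability zero, which is precisely what upgrades the conclusion to holding \emph{almost surely} as stated in Theorem~\ref{main_theorem}.
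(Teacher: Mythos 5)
Your proposal is correct and follows essentially the same route as the paper's proof: the pillbox Gauss identities at each plate (the paper's Lemma~\ref{corollary_flux}), flux conservation along stream tubes in the charge-free regions (Lemma~\ref{lemma_constant_flux}), the identification of $\mu$ and $\nu$ with forward/backward and absorbed/transmitted flux fractions (Lemmas~\ref{first_flow_lemma}--\ref{lemma_termination}), and the telescoping over repeated crossings of $z=0$ and $z=L$, with Lemma~\ref{lines_main_lemma} ruling out escape to infinity. The only cosmetic difference is that you argue directly at the level of pushforward measures, whereas the paper phrases the same flux bookkeeping through empirical counts $dn/n$ and the strong law of large numbers.
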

\vspace{-2mm}
In other words,  the movement along electrostatic field lines does indeed transfer $\mathbb{P}(\textbf{x}^+)$ to $\mathbb{Q}(\textbf{x}^-)$, as we intended. The \underline{proof} of the theorem is given in Appendix \ref{proof_main_theorem}. 


\subsection{Learning and Inference Algorithm}
\label{prac_implementation}
\color{black}
To move between data distributions, it is sufficient to follow the electric field lines. The lines can be found from the trained neural network approximating the electric field $\mathbf{E}(\widetilde{\mathbf{x}})$. In practice, we only use forward-oriented field lines, since backward-oriented ones have \underline{more curvature} and require a \underline{larger training volume}, see Appendices \ref{app:chossing_of_L} and \ref{app:backward_lines}.
\color{black}


\textbf{Training.} To recover the electric field $\textbf{E}(\cdot)$ in ${(D+1)}$-dimensional points between the hyperplanes, we approximate it with a neural network $f_{\theta}(\cdot): \mathbb{R}^{D+1} \to \mathbb{R}^{D+1}$. \color{black} The approximation requires setting of a training volume. \color{black} We sample the value $t$  from the uniform distribution $\mathcal{U}(0,L)$ and take two random samples $\widetilde{\textbf{x}}^{+}$ and $\widetilde{\textbf{x}}^{-}$. Then, we get a new point $\widetilde{\textbf{x}}$ via the following averaging and noising:
\begin{equation}
\label{middle_point_sample}
    \widetilde{\textbf{x}} = \frac{t}{L}\widetilde{\textbf{x}}^{-} + (1-\frac{t}{L})\widetilde{\textbf{x}}^{+} + \widetilde{\varepsilon},
\end{equation} 
where random $\widetilde{\varepsilon}$ is obtained as follows. First, the noise $\varepsilon$ is sampled from $\mathcal{N}(\frac{L}{2}, \sigma^{2}I_{(D+1)\times (D+1)})$, where $\sigma$ is a hyperparameter. Then, the Euclidean norm $||\varepsilon||$ is multiplied by a normalized standard Gaussian vector $m$:
\begin{equation}
\label{noise-gauss}
 \widetilde{\varepsilon} = ||\varepsilon||\frac{m}{||m||}, \quad m \sim \mathcal{N}(0,I_{(D+1)\times (D+1)}).
 \end{equation}
 \textcolor{black}{The noise term increases the training volume, thus, leading to the greater generalization ability of the neural network. We highlight that this interpolation is just one of the possible ways to \underline{define intermediate points}, see Appendix \ref{app:training_volume}.}

The ground-truth $\textbf{E}({\widetilde{\textbf{x}}})$ is estimated with \eqref{main_field_bridge}. Specifically, the integral is approximated via Monte Carlo sampling of \eqref{ND_field_continious} by using samples from $\mathbb{P}(\textbf{x}^{+})$ and $\mathbb{Q}(\textbf{x}^{-})$. Then we use a neural network approximation $f_{\theta}( {\widetilde{\textbf{x}}})$ to learn the \textit{normalized} ground-truth electric field $ \frac{\textbf{E}({\widetilde{\textbf{x}}})}{||\textbf{E}({\widetilde{\textbf{x}}})||}$ at points from the extended ${(D+1)}$-dimensional space as, analogously to PFGM, we empirically found that this strategy works better than learning the unnormalized field. We learn  $f_{\theta}( \cdot)$ by minimizing the squared error between the normalized ground truth  $\frac{\textbf{E}({\widetilde{\textbf{x}}})}{||\textbf{E}({\widetilde{\textbf{x}}})||}$ and the predictions  $f_{\theta}({\widetilde{\textbf{x}}})$  with SGD, i.e., the learning objective is
\begin{equation}
  \textcolor{black}{\mathbb{E}_{\widetilde{\textbf{x}}}|| f_{\theta}({\widetilde{\textbf{x}}}) -  \frac{\mathbf{E}(\widetilde{\textbf{x}})}{|| \mathbf{E}(\widetilde{\textbf{x}})||}||^{2} \to  \min_{\theta}}.  
\end{equation}

\textbf{Inference.} Having learned the \textcolor{black}{normalized} vector field $\frac{\textbf{E}(\cdot)}{||\textbf{E}(\cdot)||}$ in the extended space with $f_{\theta}(\cdot)$, we simulate the movement between hyperplanes to transfer data from $\mathbb{P}(\textbf{x}^{+})$ to $\mathbb{Q}(\textbf{x}^{-})$. For this, we run an ODE solver for \eqref{main_odde}.
 
One needs a right stopping time for the ODE solver. We follow the idea of PFGM and exploit the following formula:
\begin{multline}
\label{augment_ode}
\begin{aligned}
 d\widetilde{\textbf{x}} = d(\textbf{x},z) = \big(\frac{d\textbf{x}}{dt}\frac{dt}{dz}dz,dz\big)= (\mathbf{E}_{x}(\widetilde{\textbf{x}})\textbf{E}_{z}^{-1}(\widetilde{\textbf{x}}),1)dz  \\\hspace{-19mm}=(\frac{\mathbf{E}_{x}(\widetilde{\textbf{x}})}{||\mathbf{E}(\widetilde{\textbf{x}})||} \frac{||\mathbf{E}(\widetilde{\textbf{x}})||}{\textbf{E}_{z}(\widetilde{\textbf{x}})},1)dz \approx (f_{\theta}(\widetilde{\textbf{x}})_{x}f_{\theta}^{-1}(\widetilde{\textbf{x}})_{z},1)dz
\end{aligned}
\end{multline}
where we denote $f_{\theta}(\widetilde{\textbf{x}})=(f_{\theta}(\widetilde{\textbf{x}})_{x},f_{\theta}(\widetilde{\textbf{x}})_{z})$ and $\textbf{E}(\widetilde{\textbf{x}})=( \textbf{E}_{x}(\widetilde{\textbf{x}}), \textbf{E}_{z}(\widetilde{\textbf{x}}))$. In this new ODE (\ref{augment_ode}), we replace the time variable $t$ with the physically meaningful variable $z$. We start with samples from $\mathbb{P}(\textbf{x}^{+})$, i.e., when $z=0$, and we stop when $z$ reaches $L$ ($=$ right plate) during the ODE path.

\textcolor{black}{According to definition \eqref{fwd-map-def} of the forward map $T_{F}$, when field line crosses the plane $z=L$, one should continue its movement with  probability $1-\nu(\mathbf{x}_F^-)$ until the line reaches $z=L$ again. However, in practice we simply \underline{stop the movement} as accurate estimation of this probability is non-trivial, see  Appendix~\ref{app:lines_z_more_L} for extended discussions.}

All the ingredients for training and inference in our method are described in Algorithms \ref{algorithm:EFM} \textcolor{black}{and \ref{algorithm:EFM sampling}}, where we summarize the learning and the inference procedures, correspondingly. 

\begin{algorithm}[h!]
         
        \textbf{Input:} Distributions accessible by samples:\\
         \hspace*{5mm} $\mathbb{P}(\textbf{x}^{+})\delta(z)$
          and $\mathbb{Q}(\textbf{x}^{-})\delta(z-L)$;\\ 
        \hspace*{5mm} NN approximator $f_{\theta}(\cdot) :\mathbb{R}^{D+1} \to \mathbb{R}^{D+1}$; \\
        \textcolor{black}{\textbf{Output:} The learned normalized electrostatic field $f_{\theta}(\cdot)$}\\
        { \textbf{Repeat until  converged :} }{
        
            \hspace*{5mm}Sample a batch of points $\widetilde{\textbf{x}}^{+} \sim \mathbb{P}(\textbf{x}^{+})\delta(z)$;\\
            \hspace*{5mm}Sample a batch of points $\widetilde{\textbf{x}}^{-} \sim  \mathbb{Q}(\textbf{x}^{-})\delta(z-L)$;\\
            \hspace*{5mm}Sample a batch of times $t \sim \mathcal{U}(0,L)$;\\
            \hspace*{5mm}Sample a batch of noise $\widetilde{\varepsilon}$ with (\ref{noise-gauss});\\
            \hspace*{5mm}Calculate  $\widetilde{\textbf{x}} = \frac{t}{L}\widetilde{\textbf{x}}^{+} + (1-\frac{t}{L})\widetilde{\textbf{x}}^{-} + \widetilde{\varepsilon}$;\\
             \hspace*{5mm}Estimate $\textbf{E}_{+}(\widetilde{\textbf{x}})$  and $\textbf{E}_{-}(\widetilde{\textbf{x}})$ through (\ref{ND_field_continious});\\
             \hspace*{5mm}Calculate $\textbf{E}({\widetilde{\textbf{x}}})$ with (\ref{main_field_bridge});\\
            \hspace*{5mm}Compute $\mathcal{L} = \mathbb{E}_{\widetilde{\textbf{x}}}|| f_{\theta}({\widetilde{\textbf{x}}}) - \frac{\textbf{E}({\widetilde{\textbf{x}}})}{||\textbf{E}({\widetilde{\textbf{x}}})||} ||^{2} \to  \min_{\theta}$;\\
            \hspace*{5mm}Update $\theta$ by using $\frac{\partial \mathcal{L} }{\partial \theta}$;\;

        }
        \caption{ \textcolor{black}{EFM Training}}
        \label{algorithm:EFM}
\end{algorithm}

\begin{algorithm}[h!]
         \textbf{Input:} 
         sample $\widetilde{\textbf{x}}^{+}$ from $\mathbb{P}(\textbf{x}^{+})\delta(z)$; step size $\Delta\tau>0;$\\
         \hspace*{10,5mm}the learned field $f_{\theta}^{*}(\cdot) :\mathbb{R}^{D+1} \to \mathbb{R}^{D+1}$; \\
          \textbf{Output:} mapped sample $\widetilde{\textbf{x}}^{-}$ approximating $\mathbb{Q}(\textbf{x}^{-})\delta(z-L)$\\
          Set $\widetilde{\textbf{x}}_{0} \leftarrow \widetilde{\textbf{x}}^{+} $ \\
          \textbf{for} $\tau \in \{0, \Delta \tau, 2\Delta \tau,\dots, L-\Delta\tau\}$ \textbf{do}\\
          \hspace*{5mm}\textcolor{black}{Calculate  $f_{\theta}^{*}(\widetilde{\textbf{x}}_{\tau})=(f_{\theta}^{*}(\widetilde{\textbf{x}}_{\tau})_{x}, f_{\theta}^{*}(\widetilde{\textbf{x}}_{\tau})_{z})$}\\
          \hspace*{5mm} $\widetilde{\textbf{x}}_{\tau+\Delta\tau}  \leftarrow \big[(\widetilde{\textbf{x}}_{\tau})_{x} + f_{\theta}^{*}(\widetilde{\textbf{x}}_{\tau})_{z}^{-1}f_{\theta}^{*}(\widetilde{\textbf{x}}_{\tau})_x\Delta \tau; \tau+\Delta \tau\big]$\\
          $\widetilde{\textbf{x}}^{-}\leftarrow \widetilde{\textbf{x}}_{L}$
        \caption{EFM Sampling}
        \label{algorithm:EFM sampling}
\end{algorithm}

\vspace{-3mm}\section{Experimental Illustrations}

\begin{figure*}[!h]
\begin{subfigure}[b]{0.247\linewidth}
\centering
\includegraphics[width=0.995\linewidth]{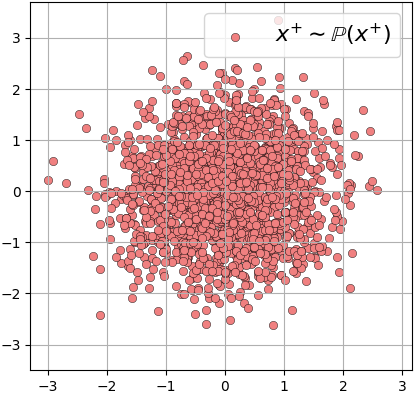}
\caption{\centering\scriptsize Samples from  $\mathbb{P}(\textbf{x}^{+})$, which are placed on the left hyperplane $z=0$ .}
\label{fig:3d_data_init}
\end{subfigure}
\begin{subfigure}[b]{0.232\linewidth}
\centering
\includegraphics[width=0.995\linewidth]{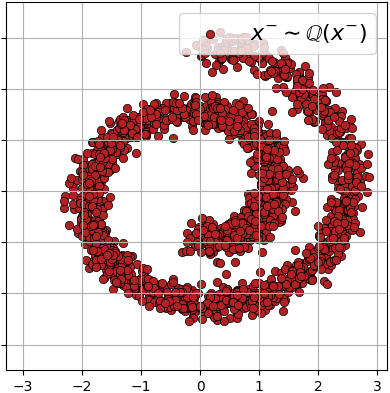}
\caption{\centering\scriptsize Samples from  $\mathbb{Q}(\textbf{x}^{-})$, which are placed on the right hyperplane $z=L$.}
\label{fig:3d_data_target}
\end{subfigure}
\begin{subfigure}[b]{0.232\linewidth}
\includegraphics[width=0.975\linewidth]{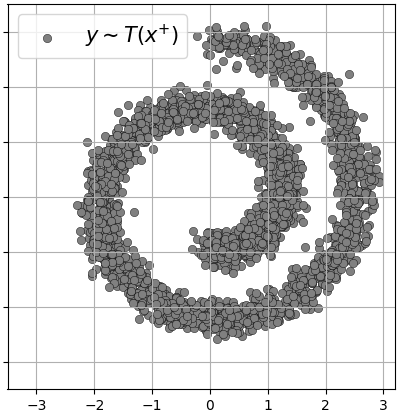}
\caption{\centering\scriptsize Mapped samples by $T(\textbf{x}^{+})$ for the distance $L=6$. }
\label{fig:3d_data_mapped}
\end{subfigure}
\begin{subfigure}[b]{0.232\linewidth}
\centering
\includegraphics[width=0.995\linewidth]{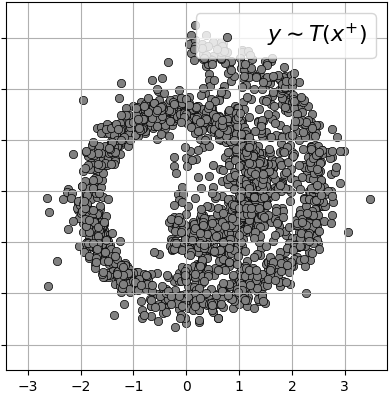}
\caption{\centering\scriptsize  Mapped samples for $T(\textbf{x}^{+})$
for the distance $L=30$. }
\label{fig:3d_mapped_long}
\end{subfigure}
\vspace{-3mm}
\caption{\centering \textit{Illustrative 2D Gaussian$\rightarrow$Swiss Roll experiment}: input and target distributions $\mathbb{P}(\textbf{x}^{+})$ and $\mathbb{Q}(\textbf{x}^{-})$ together with the result of the distribution transfer learned with our EFM method  for distances $L=6$ and $L=30$ between the capacitor plates. }
\vspace{-2mm}
\label{fig:toy}
\end{figure*}

\label{experiments} 
In this section, we demonstrate the proof-of-concept experiments with our proposed EFM method. We show a 2-dimensional illustrative experiment (\S\ref{3d_exp}), image-to-image translation experiment (\S\ref{transfer_exp}) and image generation experiment (\S\ref{generation_exp}) with the colored MNIST and CIFAR-10 datasets. Various \underline{additional aspects} of EFM are studied in Appendices: the influence  of the interplate distance $L$ (App. \ref{app:chossing_of_L}), training volume (App. \ref{app:training_volume}) and backward-oriented field lines (App. \ref{app:backward_lines}).
We give \underline{details} of experiments in \underline{Appendix \ref{app:experiments_details}}.

\subsection{Gaussian to Swiss Roll Experiment}
\label{3d_exp}
An intuitive first test to validate the method is to transfer between distributions whose densities can be visualized for comparison.
We consider the 2-dimensional zero-centered Gaussian distribution with the identity covariance matrix as $\mathbb{P}(\textbf{x}^{+})$ and the Swiss Roll distribution as $\mathbb{Q}(\textbf{x}^{-})$, see their visualizations in Figs. \ref{fig:3d_data_init} and \ref{fig:3d_data_target}, respectively.

To show the effect of hyperparameter $L$ in our EFM method, we do two experiments. In the first one, the samples from $\mathbb{Q}(\textbf{x}^{-})$ are placed on the hyperplane $L=6$ (see Fig. \ref{fig:3d_data_mapped}), while in the second one, we use $L=30$ (see Fig. \ref{fig:3d_mapped_long}). We show the learned trajectories of samples' movement along the electrostatic field in Figs. \ref{fig:3d_close} and \ref{fig:3d_long}, respectively.  

When $L$ is small, the electric field lines are more or less straight, see Fig. \ref{fig:3d_close}. The learned normalized electric field $f_{\theta}(\cdot)$ allows one to accurately perform the distribution transfer, see Fig. \ref{fig:3d_data_mapped}. However, if the distance $L$ between the hyperplanes is large, the learned map recovers the target density $\mathbb{Q}(\textbf{x}^{-})$ poorly (see Fig. \ref{fig:3d_mapped_long}). Presumably, this occurs due to a higher difficulty of the electrical field $\textbf{E}(\cdot)$ approximation for a large interplane distance $L$, as well as the neccesity to consider the lines outside $z\in (0,L)$, i.e., backward-oriented lines as well as the forward-oriented lines which go to the area $z>L$ (intersect the plate twice).

\begin{figure}[!b]
\begin{center}
\centerline{\includegraphics[width=\columnwidth]{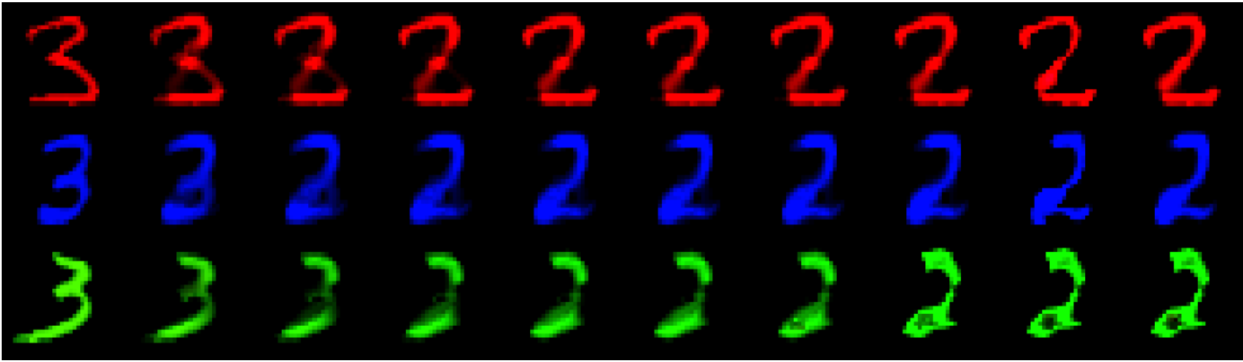}}
\vspace{-2mm}
\caption{\centering The sampling trajectories of our \textbf{EFM} method in image-to-image translation experiment, see \S\ref{transfer_exp}.}
\label{fig:traj_1}
\end{center}
\end{figure}

\begin{figure}[h]
\vspace{-3mm}
\begin{subfigure}[b]{0.15\textwidth}
\centering\includegraphics[width=1.6\linewidth]{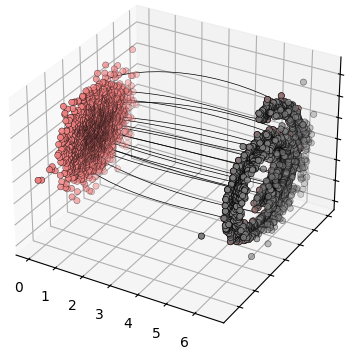}
\caption{\centering \scriptsize $L=6$} 
\label{fig:3d_close}
\end{subfigure}
\hspace{19mm}
\begin{subfigure}[b]{0.2\linewidth}
\centering\includegraphics[width=2.5\linewidth]{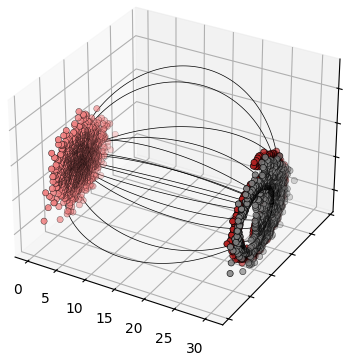}
\caption{\centering \scriptsize $L=30$}
\label{fig:3d_long}
\end{subfigure}
\vspace{-2mm}
\caption{Electric field line structure for the Gaussian$\rightarrow$Swiss Roll experiment with $L=6$ and $L=30$. It can be seen that at large
distances, the field lines are more curved than at small distances.}
\vspace{-6mm}
\label{ris:image1}
\end{figure}


\begin{figure}[!b]
\begin{center}
\centerline{\includegraphics[width=\columnwidth]{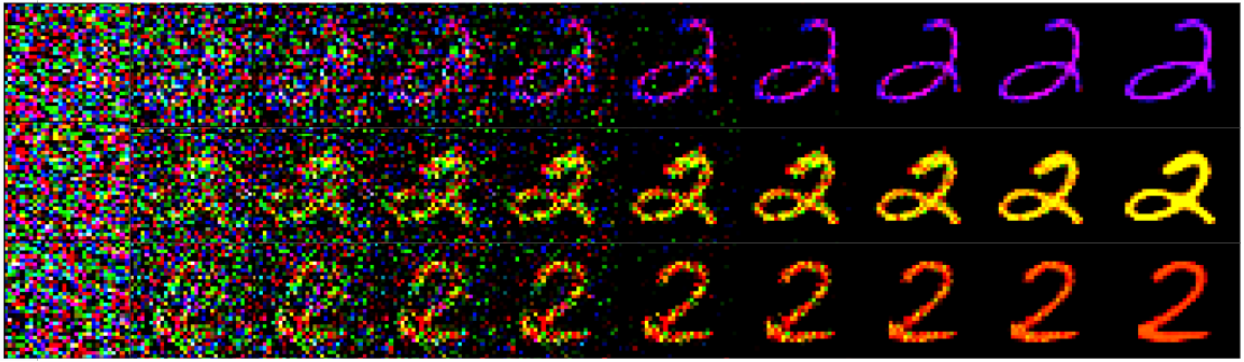}}
\vspace{-3mm}
\caption{\centering The sampling trajectories of our \textbf{EFM} method in noise-to-image generation experiment, see \S\ref{generation_exp}.}
\label{fig:traj_gener}
\end{center}

\end{figure} 
\begin{figure*}
\begin{subfigure}{0.33\linewidth}
\includegraphics[width=0.995\linewidth]{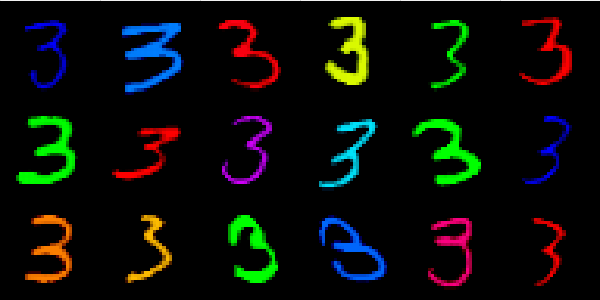}
\caption{\centering\scriptsize \centering\scriptsize Samples from  $\mathbb{P}(\textbf{x}^{+})$, which are placed on the left plate $z=0$.}
\label{fig:translation-init}
\end{subfigure}
\begin{subfigure}{0.33\linewidth}
\includegraphics[width=0.995\linewidth]{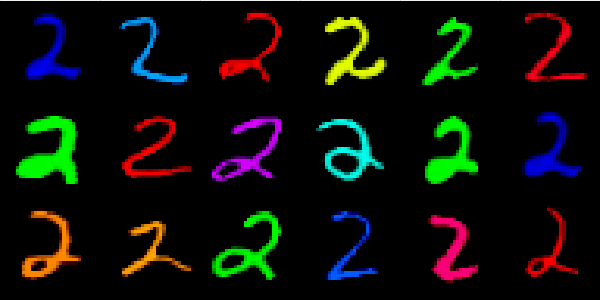}
\caption{\centering\scriptsize \centering\scriptsize Samples from \textbf{our} approximation of $\mathbb{Q}(\textbf{x}^{-})$, located on the right plate $z=10$.}
\label{fig:translation-target}
\end{subfigure}
\begin{subfigure}{0.33\linewidth}
\includegraphics[width=0.995\linewidth]{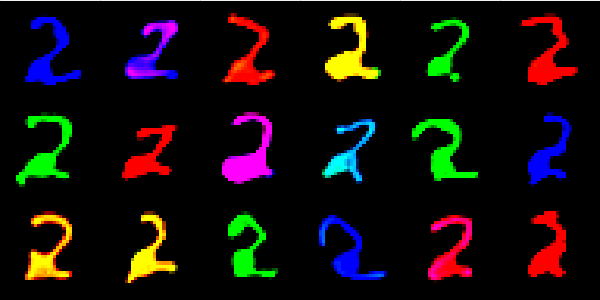}
\caption{\centering\scriptsize  Samples from \textbf{FM}'s approximation of $\mathbb{Q}(\textbf{x}^{-})$, located on the right plate $z=10$. }
\label{fig:translation-target_}
\end{subfigure}
\vspace{-8mm}
\caption{\centering \small \textit{Image-to-Image translation}. Pictures from the initial distribution, the result of applying our EFM method as well as the Flow Matching method are presented respectively.}
\vspace{-2mm}
\label{fig:translation2}
\end{figure*}

\vspace{3mm}\begin{figure*}[h]
\begin{subfigure}[b]{0.33\linewidth}
\centering
\includegraphics[width=0.995\linewidth]{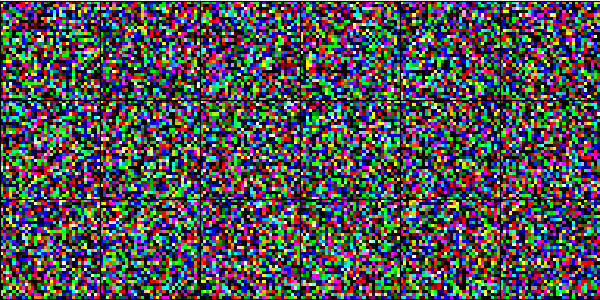}
\caption{\centering\scriptsize White noise samples from  $\mathbb{P}(\textbf{x}^{+})$, which are placed on the left plate $z=0$.}
\label{fig:generation-init_cm}
\end{subfigure}
\begin{subfigure}[b]{0.33\linewidth}
\centering
\includegraphics[width=0.995\linewidth]{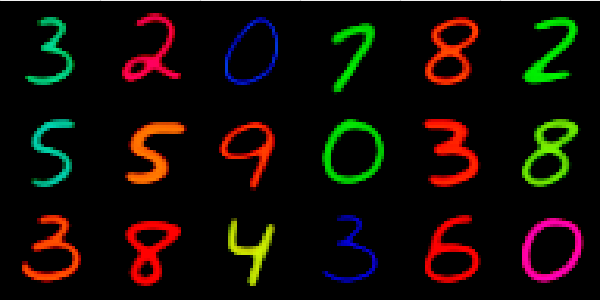 }
\caption{\centering\scriptsize Samples from \textbf{our} approximation of $\mathbb{Q}(\textbf{x}^{-})$, located on the right plate $z=30$.}
\label{fig:generation-our}
\end{subfigure}
\begin{subfigure}[b]{0.33\linewidth}
\centering
\includegraphics[width=0.995\linewidth]{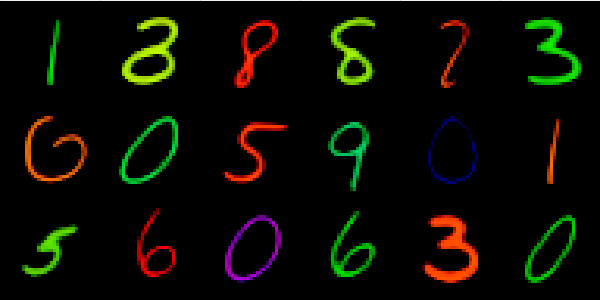}
\caption{\centering\scriptsize Samples from \textbf{PFGM}'s approximation of $\mathbb{Q}(\textbf{x}^{-})$, simulated from hemisphere with the learned field.}
\label{fig:generation-pfgm}
\end{subfigure}
\vspace{-8mm}
\caption{\centering\small\textit{Noise-to-Image generation}. Pictures from the initial distribution (Fig. \ref{fig:generation-init_cm}), the result of our EFM method (Fig. \ref{fig:generation-our}) as well as the PFGM method (Fig. \ref{fig:generation-pfgm}) are presented.}
\vspace{-2mm}
\label{fig:translation1}
\end{figure*}

\vspace{3mm}\begin{figure*}[!h]
\begin{subfigure}[b]{0.33\linewidth}
\centering
\includegraphics[width=0.995\linewidth]{noise.png}
\caption{\textcolor{black}{\centering\scriptsize White noise samples from  $\mathbb{P}(\textbf{x}^{+})$ which are placed on the left plate $z=0$.}}
\label{fig:generation-init-cifar}
\end{subfigure}
\begin{subfigure}[b]{0.33\linewidth}
\centering
\includegraphics[width=0.995\linewidth]{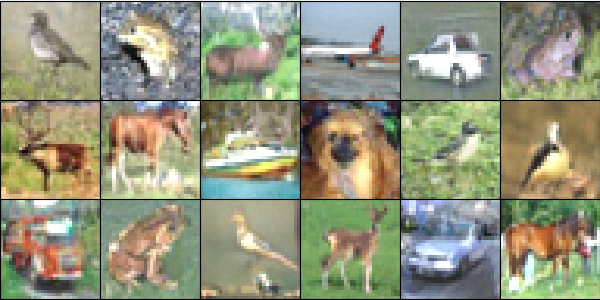}
\caption{\textcolor{black}{\centering\scriptsize Samples from \textbf{our} approximation of $\mathbb{Q}(\textbf{x}^{-})$ located on the right plate $z=500$.}}
\label{fig:generation-our-cifar}
\end{subfigure}
\begin{subfigure}[b]{0.33\linewidth}
\centering
\includegraphics[width=0.995\linewidth]{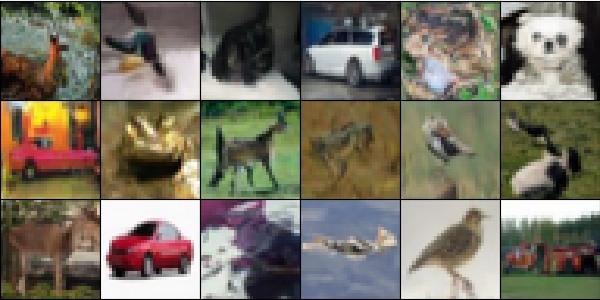}
\caption{\textcolor{black}{\centering\scriptsize Samples from \textbf{PFGM}'s approximation of $\mathbb{Q}(\textbf{x}^{-})$ simulated from hemisphere with the learned field.}}
\label{fig:generation-pfgm-cifar}
\end{subfigure}
\vspace{-8mm}
\caption{\textcolor{black}{\centering\small\textit{Noise-to-Image generation}. Pictures from the initial distribution (Fig. \ref{fig:generation-init-cifar}), the result of our EFM method (Fig. \ref{fig:generation-our-cifar}) as well as the PFGM method (Fig. \ref{fig:generation-pfgm-cifar}) are presented.}}
\vspace{-2mm}
\label{fig:CIFAR_generation}
\end{figure*}

\vspace{-5mm}
\subsection{Image-to-Image Translation Experiment}
\label{transfer_exp}

Here we consider the image-to-image translation task for transforming colored digits 3 to colored digits 2 \citep[\wasyparagraph 5.3]{gushchin2024entropic}. The data is based on the conventional  MNIST images dataset but the digits are randomly colored. We consider $\textit{unpaired}$ translation task, i.e., there is no pre-defined correspondence, see \citep[Fig. 2]{zhu2017unpaired}.
 
We place colored digits 3 on the left hyperplane $z=0$ and colored digits 2 on the right plate $z=10$. We learn the normalized electric field between the plates and demonstrate input-translated pairs in Fig.~\ref{fig:translation-init} and \ref{fig:translation-target}, respectively. Also, we show how the translation happens in Fig. \ref{fig:traj_1}. 


For comparison, we add the results of the translation of the popular ODE-based Flow Matching (FM) method \citep{liu2022flow,lipmanflow,tong2023conditional} in Fig. \ref{fig:translation-target_}. The key difference between our method and FM is that FM matches to a \textit{time}-conditional transformation (velocity), whereas our method matches to a \textit{space}-conditional transformation (electric field). Interestingly, FM does not always accurately translate the shape and color of the initial digits 3. In Appendix \ref{app:experiments_details}, we demonstrate the results of the translation obtained with several \underline{other relevant methods}.



 
 
\subsection{Image Generation Experiments}
\label{generation_exp}
\textbf{MNIST}. We also consider the task of generating $32 \times 32$ full colored digits from the MNIST dataset. For this task, we place white noise on the left hyperplane $z=0$ and colored digits on the right plate $z=30$. After learning the electric field between the plates, we demonstrate mapping to the target distribution $\mathbb{Q}(\textbf{x}^{-})$ in Fig. \ref{fig:generation-our}.
We qualitatively see that even using only forward lines allows to recovers the target distribution $\mathbb{Q}(\textbf{x}^{-})$ of colored digits well. The sampling trajectories for our EFM are shown in Fig. \ref{fig:traj_gener}.

\color{black}
\textbf{CIFAR-10}. The experiment with qualitative comparisons on more complex CIFAR-10 data  is presented in  Fig.\ref{fig:generation-our-cifar}. We place the white noise on the left plate $z=0$, while images are placed on the right plate $z=500$. 

For completeness and comparison, we show the results of generation of \citep[PFGM]{xu2022poissonflowgenerativemodels} which is also based on the electrostatic theory, see Fig.\ref{fig:generation-pfgm} and Fig.\ref{fig:generation-pfgm-cifar}.  
\color{black}

\section{Limitations}

Our EFM method has several limitations which open promising avenues for future work and improvements.

\textbf{Influence of dimensionality}. In high dimensions, our algorithm may need to operate with small values. Specifically, the multiplier $1/||\textbf{x} - \textbf{x}'||^D$ in the electric field formula  (\ref{ND_field_continious}) can produce values comparable to machine precision as $D$ increases. As a result, the training of our method may become less stable. Note that the same holds for PFGM.

\textbf{The impact of interplate distance $L$ on the field.}  The larger the interplate distance $L$ is, the more curved and disperse the electric field lines become, see, e.g., Fig. \ref{fig:3d_long}. With an increase of the distance the electric field has to be accurately learned in a larger volume between the plates. A careful selection of the hyperparameter $L$ may be important when applying our method. \color{black} Further discussion of the \underline{choice of parameter $L$} can be found in Appendix \ref{app:chossing_of_L}.\color{black}

\textbf{Defining the optimal training volume.} Our training approach involves sampling points $\widetilde{\textbf{x}}^+$ and $\widetilde{\textbf{x}}^-$ from the distributions, interpolating them with (\ref{middle_point_sample}) and noising them with (\ref{noise-gauss}). This allows us to consider an intermediate point $\widetilde{\textbf{x}}$ between the plates (\ref{middle_point_sample}) to learn the electrostatic field. 
There may exist more \underline{clever schemes} to choose such points, see Appendix \ref{app:training_volume}. It is a promising question of further research.

\color{black}
\textbf{The problem of lines going beyond $z\in (0,L)$.} Modeling the movement along the electric field lines, e.g., forward stochastic map $T_{F}$ in \eqref{fwd-map-def}, theoretically requires considering the lines which \underline{cross the boundary} $z=L$. However, in practice, we ignore this requirement and stop when $z=L$, which may lead to incorrect learning of the target data distribution. We discuss this in Appendix \ref{app:lines_z_more_L}.

\textbf{Backward-oriented field lines.} Each plate emits two distinct field line sets: forward-oriented and backward-oriented trajectories. Our practical implementation exclusively utilizes forward-oriented lines due to their reduced training volume requirements. Nevertheless, backward-oriented trajectories remain theoretically significant, particularly for ensuring complete \underline{support coverage} of $\mathbb{Q}(\mathbf{x}^-)$, see App.~\ref{app:backward_lines}.


\color{black}

\section*{Impact Statement} This paper presents work whose goal is to advance the field of Machine Learning. There are many potential societal consequences 
of our work, none of which we feel must be specifically highlighted here.
\label{discussions} 

\section*{Acknowledgement} The work was supported by the grant for research centers in the field of AI provided by the Ministry of Economic Development of the Russian Federation in accordance with the agreement 000000C313925P4F0002 and the agreement with Skoltech №139-10-2025-033.


\bibliography{references}
\bibliographystyle{icml2025}

\newpage
\appendix
\onecolumn

\section{Properties of D-dimensional electric field lines}
\label{ND_lines}

In this auxiliary section, we formulate and prove the basic properties of electric field lines in $D$-dimensions. 


\begin{definition}[The flux of the electric field]
\label{flux_def}
    The flux of the electric field $\textbf{E}$ through an area $\textbf{dS}$ is called $d\Phi = \textbf{E}\cdot\textbf{dS}$. The flux through a finite surface $\Sigma$ is defined as an integral:

    \begin{equation}
        \Phi = \int_\Sigma d\Phi = \iint_\Sigma \textbf{E}\cdot\textbf{dS}.
    \end{equation}
\end{definition}

\begin{definition}[The stream surface, Fig.\ref{appendixA2}]
    Consider a closed piecewise smooth curve $\Gamma$ placed in an electric field. Consider field line passes through each point of this contour. The set of these lines is called a stream surface (tube).
\end{definition}
\begin{figure}[ht]
\vskip 0.2in
\begin{center}
\centerline{\includegraphics[width=50mm]{EFMapp1.png}}
\caption{The electric field flux through an arbitrary stream tube.}
\label{appendixA2}
\end{center}
\vskip -0.2in
\end{figure}
\begin{lemma}[Conservation of the field flux.]
\label{lemma_constant_flux}
    The electric field flux is conserved along a stream surface if there are no charges inside that surface. 
\end{lemma}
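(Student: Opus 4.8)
The plan is to reduce the statement to the $D$-dimensional Gauss's theorem by turning the stream tube into a closed surface. First I would fix two cross-sections $\Sigma_1$ and $\Sigma_2$ of the stream surface and let $\Sigma_{\mathrm{lat}}$ denote the portion of the stream surface (the lateral wall of the tube) lying between them. Together these three pieces bound a region $M$ with $\partial M=\Sigma_1\cup\Sigma_2\cup\Sigma_{\mathrm{lat}}$. By hypothesis $M$ encloses no charge, so the $D$-dimensional analog of Gauss's theorem (\ref{3D_gauss}) gives
\[
\oint_{\partial M}\mathbf{E}\cdot d\mathbf{S}=\int_M q(\widetilde{\mathbf{x}})\,d\widetilde{\mathbf{x}}=0.
\]

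Next I would split this flux into the three boundary contributions and show that the lateral wall contributes nothing. The key observation is that, by construction, $\Sigma_{\mathrm{lat}}$ is a union of field lines, and by the defining property (\ref{3D_electric_line}) the tangent to each such line is parallel to $\mathbf{E}$. Hence at every point of $\Sigma_{\mathrm{lat}}$ the field $\mathbf{E}$ lies in the tangent space of the surface, i.e. it is orthogonal to the outward area element $d\mathbf{S}$, so $\mathbf{E}\cdot d\mathbf{S}=0$ pointwise and $\int_{\Sigma_{\mathrm{lat}}}\mathbf{E}\cdot d\mathbf{S}=0$. Substituting this back leaves $\int_{\Sigma_1}\mathbf{E}\cdot d\mathbf{S}+\int_{\Sigma_2}\mathbf{E}\cdot d\mathbf{S}=0$.

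Finally I would reconcile orientations. Since Gauss's theorem uses the outward normal on $\partial M$, the induced normals on $\Sigma_1$ and $\Sigma_2$ point in opposite senses along the tube; reversing the orientation on one of them to a common downstream orientation converts the identity above into equality of the two cross-sectional fluxes, which is exactly the conservation claim.

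The main obstacle I expect is the step that makes the lateral flux vanish rigorously: one must justify that the stream surface is genuinely tangent to $\mathbf{E}$ everywhere. This needs enough regularity of $\mathbf{E}$ for the field lines through $\Gamma$ to sweep out a well-defined, piecewise-smooth hypersurface — which holds away from charges, where $\mathbf{E}$ is smooth by (\ref{ND_field_continious}) — together with a treatment of the possible non-smooth edges inherited from $\Gamma$ via the piecewise-smooth hypothesis. Once tangency is secured, the remainder is just bookkeeping of signs and orientations.
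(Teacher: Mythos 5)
Your argument is correct and follows essentially the same route as the paper's proof: close the stream tube with two cross-sections, apply the $D$-dimensional Gauss's theorem to conclude zero total flux, observe that the lateral flux vanishes because $\mathbf{E}$ is tangent to the stream surface, and fix the sign by reorienting one cross-section. The only addition is your (reasonable) remark on the regularity needed for the tangency step, which the paper takes for granted.
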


\begin{proof}
    Consider an arbitrary stream tube (Fig.  \ref{appendixA2}). Let us cut this tube by two arbitrary piecewise smooth cross sections. As a result, we obtain a closed surface. Let us denote the flow through the side cross sections by $\Phi_1$ and $\Phi_2$, respectively. Note that the normal for closed surfaces is directed outwards. Near the right end of the tube, the normal and the electric field are co-directional, and near the left boundary, they have opposite directions. Therefore, $\Phi_1 = -\Phi_1'$. We have to prove that $\Phi_1' = \Phi_2$. The full flux is a sum of fluxes through the ends of the tube and through its lateral surface:  
    \begin{equation}
        \Phi_{full} = \Phi_1+\Phi_2+\Phi_{lat}.
    \end{equation}

    By the definition of a stream tube, the flux through the lateral surface must be zero: $\Phi_{lat} = 0$, because the field $\mathbf{E}$ and the normal to the lateral surface are orthogonal. Thus, $\Phi_{full} = \Phi_1 + \Phi_2 = \Phi_2 - \Phi_1'$. From Gauss's theorem (\ref{3D_gauss}), we derive:
    \begin{equation}
        \Phi_{full} = \iint_{\partial M} \textbf{E}\cdot\textbf{dS} = \int_M q(\textbf{x})d\textbf{x} = 0. 
    \end{equation}
It follows that $\Phi_1' = \Phi_2$.\end{proof}

\begin{corollary}[Impossibility of line termination in empty space]
\label{corollary_no_drop_in_empty_space}
    An electric field line cannot terminate in empty space.
\end{corollary}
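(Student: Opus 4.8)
The plan is to argue by contradiction, using Lemma~\ref{lemma_constant_flux} (conservation of flux along a stream surface in a charge-free region) as the main engine. Suppose, toward a contradiction, that some electric field line $\gamma$ terminates at a point $\widetilde{\textbf{x}}_0$ lying in empty space, so that there is an open neighborhood $U$ of $\widetilde{\textbf{x}}_0$ with $q(\widetilde{\textbf{x}})=0$ for all $\widetilde{\textbf{x}}\in U$. By the defining ODE (\ref{3D_electric_line}), $\gamma$ is an integral curve of $\textbf{E}$, and along $\gamma$ the field is nonzero.

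First I would build a thin stream tube around $\gamma$ as it approaches $\widetilde{\textbf{x}}_0$. Concretely, I pick a point $\widetilde{\textbf{x}}_1\in\gamma\cap U$ slightly before the termination point and a small closed piecewise-smooth curve $\Gamma$ transverse to $\gamma$ through $\widetilde{\textbf{x}}_1$; the field lines passing through $\Gamma$ generate a stream tube. The cross-section $\Sigma_1$ bounded by $\Gamma$ carries a flux $\Phi_1=\iint_{\Sigma_1}\textbf{E}\cdot\textbf{dS}$ which is nonzero, since $\textbf{E}\neq 0$ on $\gamma$ and $\Gamma$ can be chosen so that $\textbf{E}$ is essentially aligned with the tube axis across $\Sigma_1$.

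Next I would exploit the termination to produce a second cross-section with vanishing flux. If $\gamma$ truly ends at $\widetilde{\textbf{x}}_0$, then beyond $\widetilde{\textbf{x}}_0$ the lines of the tube do not continue, so a cross-section $\Sigma_2$ of the tube placed on the far side of $\widetilde{\textbf{x}}_0$ (still inside $U$) is pierced by no field lines and hence carries zero flux, $\Phi_2=0$. Since the portion of the tube between $\Sigma_1$ and $\Sigma_2$ lies entirely in the charge-free set $U$, Lemma~\ref{lemma_constant_flux} forces $\Phi_1=\Phi_2$, i.e.\ $\Phi_1=0$, contradicting $\Phi_1\neq 0$. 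This contradiction would show that no field line can terminate in empty space.

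The main obstacle I anticipate is the degenerate case in which the field actually vanishes at the termination point, $\textbf{E}(\widetilde{\textbf{x}}_0)=0$ (a stagnation point): there the stream tube may pinch to zero cross-sectional area and the two-cross-section argument becomes delicate. To handle this robustly I would instead fall back on Gauss's theorem (\ref{3D_gauss}) applied to a small closed surface $\partial M\subset U$ enclosing $\widetilde{\textbf{x}}_0$: the net flux $\iint_{\partial M}\textbf{E}\cdot\textbf{dS}$ equals the enclosed charge, which is $0$, whereas a genuine termination of field lines at an interior point would register as a strict net inflow (a local sink), again a contradiction. Making precise the claim that ``lines ending at an interior point imply nonzero net flux through an enclosing surface'' is the step that requires the most care.
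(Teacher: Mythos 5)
Your argument is essentially the same as the paper's: the paper also deduces the corollary from Lemma~\ref{lemma_constant_flux} by noting that a stream tube around the termination point would have flux entering but not exiting, contradicting flux conservation in a charge-free region. Your version is more carefully spelled out (two explicit cross-sections, plus the stagnation-point caveat handled via Gauss's theorem), but the underlying idea is identical to the paper's brief proof.
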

\begin{proof}
   Otherwise, the field flux inside the current tube surrounding the termination point will not be conserved. Intuitively, near a given point the field will enter, but will not exit, which is impossible in a region with a zero charge density.
\end{proof}

\begin{figure}[ht]
\vskip 0.2in
\begin{center}
\centerline{\includegraphics[width=60mm]{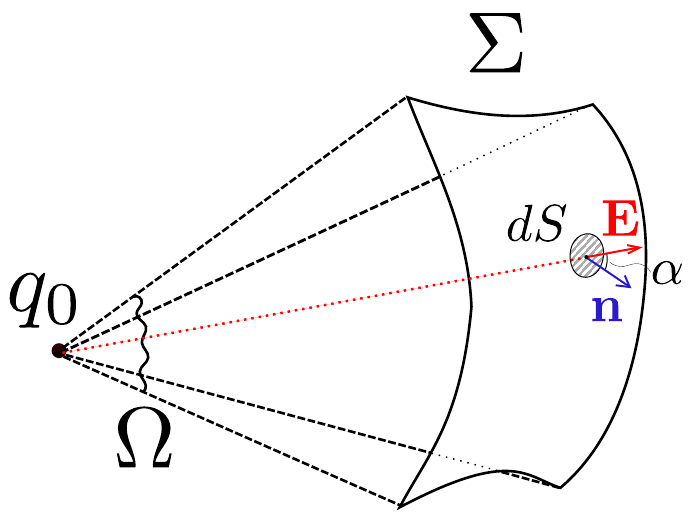}}
\caption{The field flux of a point charge $q_0$ through an arbitrary surface $\Sigma$ seen at solid angle $\Omega$.}
\label{appendixA1}
\end{center}
\vskip -0.2in
\end{figure}

\begin{lemma}[The electric field flux from a point charge.]
Let us assume that a surface $\Sigma$ is seen at a solid angle\footnote{By definition, the solid angle $\Omega$ at which the given surface $\Sigma$ is visible from a given point Q is equal to the area of the projection of this surface $\Sigma$ onto the unit sphere centered at this point. Mathematically, the solid angle element in $\mathbb{R}^D$ is defined as $d\Omega_D = \frac{\mathbf{r} \cdot d\mathbf{S}}{r^D} = \frac{dS_{\perp}}{r^{D-1}}$, where $\mathbf{r}\in\mathbb{R}^D$ is a vector drawn from Q to the considered point on surface $\Sigma$, $ r = ||\mathbf{r}||$ ,  $dS_{\perp} = dS\cos\alpha = \mathbf{dS}\cdot \frac{\mathbf{r}}{r}$.} $\Omega$ from a point charge $q_0$ (Fig. \ref{appendixA1}). The electric field flux through this surface is equal to:

\begin{equation}
    \Phi = \frac{q_0\Omega}{S_{D-1}}.
    \label{flux_through_sigma}
\end{equation}
    
\end{lemma}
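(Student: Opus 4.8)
The plan is to prove the identity by direct substitution, since the result follows almost immediately from the explicit $D$-dimensional field formula (\ref{ND_field}) combined with the definition of the solid angle element recalled in the lemma's footnote. First I would place the point charge $q_0$ at the vertex $Q$ from which the surface is viewed; writing $\mathbf{r} = \mathbf{x} - Q$ and $r = \|\mathbf{r}\|$, the field of $q_0$ at a point $\mathbf{x}$ of $\Sigma$ is $\textbf{E}(\mathbf{x}) = \frac{q_0}{S_{D-1}}\frac{\mathbf{r}}{r^{D}}$.

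Next I would expand the flux through $\Sigma$ according to Definition~\ref{flux_def}, namely $\Phi = \iint_\Sigma \textbf{E}\cdot\textbf{dS}$. Substituting the field expression and pulling the constant factor $\frac{q_0}{S_{D-1}}$ outside the integral yields $\Phi = \frac{q_0}{S_{D-1}}\iint_\Sigma \frac{\mathbf{r}\cdot\textbf{dS}}{r^{D}}$.

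The key step is to recognize the integrand as exactly the solid angle element $d\Omega_D = \frac{\mathbf{r}\cdot\textbf{dS}}{r^{D}}$ from the footnote definition. Integrating over $\Sigma$ then simply accumulates the total solid angle that the surface subtends at $Q$, i.e.\ $\iint_\Sigma d\Omega_D = \Omega$. Combining these gives $\Phi = \frac{q_0\Omega}{S_{D-1}}$, as claimed.

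Since every step is a substitution of an already-stated definition, there is essentially no technical obstacle; the one point deserving care is the geometric meaning of $d\Omega_D$. One should confirm that $\frac{\mathbf{r}\cdot\textbf{dS}}{r^{D}}$ is the signed area onto which $\textbf{dS}$ projects on the unit sphere centered at $Q$: the factor $\textbf{dS}\cdot\frac{\mathbf{r}}{r} = dS_\perp$ extracts the component of the patch transverse to $\mathbf{r}$, and the remaining $r^{-(D-1)}$ rescales a patch sitting at radius $r$ down to the unit sphere, exactly as the footnote asserts. A useful consistency check is the closed-surface case enclosing the charge, where $\Omega = S_{D-1}$ and the formula collapses to $\Phi = q_0$, recovering Gauss's theorem (\ref{3D_gauss}).
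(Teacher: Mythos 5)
Your proof is correct and follows essentially the same route as the paper: substitute the $D$-dimensional point-charge field (\ref{ND_field}) into the flux definition, identify the integrand with the solid-angle element $d\Omega_D$ from the footnote, and integrate to obtain $\Omega$. The only cosmetic difference is that you work with the vector form $\frac{\mathbf{r}\cdot\mathbf{dS}}{r^{D}}$ directly, whereas the paper passes through $dS_{\perp}=dS\cos\alpha$; these are identical by the footnote's definition.
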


\begin{proof}
    Divide $\Sigma$ into small surface elements $dS$. The total flux is the integral over the entire surface, $\Phi = \int_\Sigma d\Phi$. By definition of flux (Definition \ref{flux_def}), and according to (\ref{ND_field}) we have:

    \begin{equation}
        d\Phi = \textbf{E}\cdot\textbf{dS}= E dS \cos\alpha = E dS_{\perp} = \frac{q_0}{S_{D-1}}\frac{dS_{\perp}}{r^{D-1}} = \frac{q_0d\Omega}{S_{D-1}},
    \end{equation}

    where $\alpha$ is the angle between $\mathbf{E}$ and  $\mathbf{dS}$, $dS_{\perp} = dS\cos\alpha$. Then after integration over the solid angle, we obtain (\ref{flux_through_sigma}).
\end{proof}

\begin{lemma}[Electric field lines start and end points]
\label{lines_main_lemma}
    Let an electrically neutral system, $\int q(\textbf{x})d\textbf{x}  = 0$, be bounded in space, i.e., $q(\textbf{x})$ has compact support. Then the electric field lines must begin at positive charges and end at negative charges, except perhaps for the number of lines of zero flow. 
\end{lemma}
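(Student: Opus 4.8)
The plan is to show that the only way a field line can avoid connecting a positive charge to a negative one is to escape to (or arrive from) infinity, and then to prove that the flux carried by all such escaping lines vanishes; the exceptional ``lines of zero flow'' are exactly this set. I would fix a sphere $S_R = \{\|\textbf{x}\| = R\}$ large enough to contain the whole (compact) support of $q$ and study the flux of $\mathbf{E}$ across it as $R\to\infty$.

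First I would rule out every other possible fate of a field line. By Corollary \ref{corollary_no_drop_in_empty_space} a line cannot terminate in empty space, and by Lemma \ref{no_loops_lemma} it cannot close into a loop. Moreover, the circulation theorem makes $\mathbf{E} = -\nabla\phi$ a gradient field, so along any nontrivial line $\frac{d}{dt}\phi(\textbf{x}(t)) = -\|\mathbf{E}\|^2 < 0$: the potential strictly decreases, which forbids bounded recurrent trajectories and leaves only the critical-point separatrices (a lower-dimensional, hence zero-flux, family) as a bounded exception. Consequently, traced along $\mathbf{E}$ every line either reaches a negative charge or leaves every bounded region, and traced along $-\mathbf{E}$ it either starts at a positive charge or comes in from infinity. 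It then remains only to bound the flux of the lines running out to, or in from, infinity.

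The technical heart is the decay of $\mathbf{E}$. Expanding (\ref{ND_field_continious}) for large $\|\textbf{x}\|$ with $q$ of compact support, the leading monopole contribution is proportional to $\int q(\textbf{x}')\,d\textbf{x}' = 0$ and so vanishes by neutrality; the first surviving (dipole) term gives $\|\mathbf{E}(\textbf{x})\| = O(\|\textbf{x}\|^{-D})$. Hence the total unsigned flux obeys $\int_{S_R}\|\mathbf{E}\|\,dS \le \big(\sup_{S_R}\|\mathbf{E}\|\big)\,\mathrm{Area}(S_R) = O(R^{-D})\,O(R^{D-1}) = O(R^{-1}) \to 0$. On the other hand, Gauss's theorem (\ref{3D_gauss}) applied to $S_R$ equates the net flux with the enclosed charge, which is $0$; together with the vanishing total unsigned flux this forces the outward flux $\int_{S_R,\,\mathbf{E}\cdot\mathbf{n}>0}\mathbf{E}\cdot d\mathbf{S}$ to tend to $0$ as well. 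Any line escaping to infinity crosses every such $S_R$ outward, so by the flux-conservation Lemma \ref{lemma_constant_flux}, applied in the charge-free exterior, the flux it and its companions carry is at most this outward flux, hence $0$; the symmetric argument handles lines coming in from infinity. Apart from this zero-flux exceptional set, every line therefore begins at a positive charge and ends at a negative charge.

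I expect the main obstacle to be the passage from ``the field decays and the net flux is zero'' to ``the escaping lines carry no flux'': this is where the stream-tube machinery of Lemma \ref{lemma_constant_flux} is essential and where one must make precise the flux associated with a family of lines, since a naive net-flux bound alone would be defeated by lines that leave and return. The multipole decay estimate, although it is the computational core, is a routine Taylor expansion once neutrality is used to cancel the monopole term.
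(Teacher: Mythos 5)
Your proposal is correct and follows essentially the same route as the paper's proof: both use the multipole decomposition, note that neutrality kills the monopole term so the field decays one order faster, conclude that the flux through any surface at distance $R\to\infty$ vanishes, and hence that the family of lines escaping to (or arriving from) infinity carries zero flux. Your explicit bound on the \emph{unsigned} flux $\int_{S_R}\|\mathbf{E}\|\,dS = O(R^{-1})$ is in fact a slightly more careful rendering of the paper's assertion that ``the flux through any surface located at a distance $\|\mathbf{r}\|\ge L$ is zero,'' since it rules out cancellation between outgoing and incoming lines, and your use of the potential's strict decrease along trajectories to exclude bounded recurrence is a clean substitute for the paper's implicit reliance on Corollary~\ref{corollary_no_drop_in_empty_space} and Lemma~\ref{no_loops_lemma}.
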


\begin{proof}
    Let us denote the diameter of the system by $\ell = \max_{\textbf{x},\textbf{x}'\in\text{supp}(q) }(||\textbf{x} - \textbf{x}'||)$. Consider our system at a large distance \( L \gg \ell \). Let the small parameter \( \xi = \ell/L \ll 1 \) characterize this distance. We use the multipole decomposition of an electric field \citep[\wasyparagraph 40-41]{LandauLifshitz2}:

    \begin{equation}
        \textbf{E}|_{r\to L} = \textbf{E}^{(0)} + \textbf{E}^{(1)} + ... = \textbf{E}_{\text{point}} + O\left(\frac{\xi}{L^{D-2}}\right).
    \end{equation}

     Multipole decomposition is a representation of an electric field as a sum of contributions from point sources of different multipoles, monopole $\textbf{E}^{(0)} = \textbf{E}_{\text{point}}$, dipole $\textbf{E}^{(1)}$, quadrupole $\textbf{E}^{(2)}$, etc. It allows describing the field at large distances from the system, simplifying the calculations by neglecting the contributions of higher multipoles. In our problem, the multipole decomposition shows that the field at a large distance from the system looks like a field of a point charge located at the origin. All other contributions can be neglected.  And therefore, in the limiting case $\xi\rightarrow 0, L\rightarrow\infty$ the formula (\ref{flux_through_sigma}) can be used. Since there is no limit on the increase of $L$, one can achieve an approximation accuracy as high as one needs.

    Then $\Phi = \int \textbf{E}\cdot\textbf{dS} = Q\Omega/S_{D-1} = 0$, because $Q = \int q(\textbf{x})d\textbf{x} = 0$ is the total charge. Hence, the flux through any surface located at a distance $||\textbf{r}||\geq L$ from the system is zero:

    \begin{equation}
        \int\textbf{E}\cdot\textbf{dS} = 0 \text{ when } ||\textbf{r}||\geq L\to\infty.
    \end{equation}

    Suppose a non-measure-zero set of field lines escapes to infinity. These lines would generate flux $\Phi \neq 0$ in contradiction to $\Phi = 0$. Thus, all field lines originate at positive charges and terminate at negative ones, except for a set of measure zero. 
\end{proof}

\begin{figure}[ht]
\vskip 0.2in
\begin{center}
\centerline{\includegraphics[width=50mm]{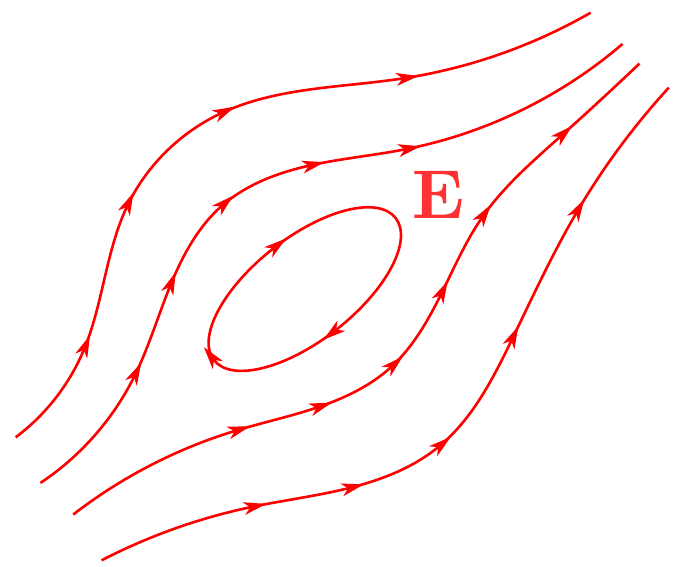}}
\caption{Closed loop of an electric field line. This situation is impossible due to the circulation theorem. }
\label{appendixA3}
\end{center}
\vskip -0.2in
\end{figure}

\begin{lemma}[Absence of closed electric field loops]
\label{no_loops_lemma}
    Electric field lines cannot form closed loops (as shown in Fig.~\ref{appendixA3}).
\end{lemma}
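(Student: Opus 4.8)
The plan is to argue by contradiction using the circulation theorem (\ref{3D_circ}), which in the $D$-dimensional setting of this appendix carries over unchanged. Suppose, for the sake of contradiction, that some electric field line forms a closed loop $\ell$. By the defining property of a field line (\ref{3D_electric_line}), at every point of $\ell$ the tangent element $\textbf{dl}$ is parallel to and co-directional with the field $\textbf{E}$ at that point. Consequently the circulation integrand is nonnegative everywhere along the loop, and in fact equals the product of the magnitudes: $\textbf{E}\cdot\textbf{dl} = ||\textbf{E}||\,||\textbf{dl}||$.

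Next I would integrate around the entire loop. Because $\ell$ is a genuine field line that is actually traversed, the field is nonzero along a sub-arc of positive length, so the integrand is strictly positive there while remaining nonnegative elsewhere. Hence
\begin{equation*}
\oint_\ell \textbf{E}\cdot\textbf{dl} = \oint_\ell ||\textbf{E}||\,||\textbf{dl}|| > 0.
\end{equation*}
This directly contradicts the circulation theorem (\ref{3D_circ}), which asserts that the circulation of $\textbf{E}$ around any closed loop vanishes. The contradiction shows that no electric field line can close up on itself, which is exactly the claim of the lemma.

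The main obstacle I anticipate is the degenerate case in which $\textbf{E}$ vanishes at one or more points of the loop: there the field-line ODE (\ref{3D_electric_line}) has a stationary point, the ``tangent parallel to $\textbf{E}$'' statement becomes vacuous, and one must rule out the possibility that the integrand is zero almost everywhere. I would dispose of this by observing that the dynamics $d\textbf{x}/dt = \textbf{E}(\textbf{x}(t))$ cannot push a trajectory through a zero of the field — such a point is an equilibrium where motion halts — so a field line that genuinely returns to its starting point must have $\textbf{E}\neq 0$ on a set of full measure along $\ell$. This keeps the circulation strictly positive and makes the contradiction with (\ref{3D_circ}) robust, completing the argument.
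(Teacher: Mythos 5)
Your proposal is correct and follows essentially the same route as the paper: both argue by contradiction, noting that $\mathbf{E}\cdot\mathbf{dl}\geq 0$ along a putative closed field line with strict positivity on some sub-arc, so the circulation would be strictly positive in violation of the circulation theorem (\ref{3D_circ}). Your extra care about zeros of $\mathbf{E}$ along the loop is a slightly more explicit version of the paper's parenthetical remark that the integrand cannot vanish everywhere on a genuine closed field line.
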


\begin{proof}
    Assume, to the contrary, that a closed loop $\ell$ exists. Then  
    $\mathbf{E} \cdot d\mathbf{l} \geq 0$ holds everywhere along $\ell$ (since $\mathbf{E}$ and $\mathbf{dl}$ are co-directed), and the equality to zero of $\mathbf{E} \cdot d\mathbf{l}=0$ cannot be satisfied everywhere (otherwise it is not a closed loop and even not a field line). Consequently, the circulation integral satisfies:
    \[
    \oint_{\ell} \mathbf{E} \cdot d\mathbf{l} > 0.
    \]
    However, the circulation theorem (\ref{3D_circ}) requires:
    \[
    \oint_{\ell} \mathbf{E} \cdot d\mathbf{l} = 0,
    \]
    yielding a contradiction.
\end{proof}

\section{Proof of Electrostatic Field Matching theorem}
\label{proof_main_theorem}

In this section, we prove several auxiliary lemmas, and then move on to the main theorem. Let $\mathbb{P}(\cdot)$, $\mathbb{Q}(\cdot)$ be two $D$-dimensional data distributions having a compact support, located on the planes $z=0$ and $z=L$ in $\mathbb{R}^{D+1}$, respectively.

\begin{lemma}[On the relation between distribution and field values]
\label{corollary_flux}
For any point in the support of distribution $\mathbb{P}(\cdot)$:

    \begin{equation}
        E_z^+(\widetilde{\textbf{x}}) - E_z^-(\widetilde{\textbf{x}}) = \mathbb{P}(\widetilde{\textbf{x}}).
    \end{equation}
\end{lemma}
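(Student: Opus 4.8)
The statement is the classical \emph{jump condition} for the normal component of the electrostatic field across a charged hyperplane: the surface charge living on $z=0$ has density $q^+(\textbf{x})\delta(z)=\mathbb{P}(\textbf{x})\delta(z)$, and the discontinuity of $E_z$ across it should equal the local surface density $\mathbb{P}(\widetilde{\textbf{x}})$. The plan is to obtain this by applying the $D$-dimensional Gauss's theorem (the $\mathbb{R}^{D+1}$ analogue of \eqref{3D_gauss}, see \S\ref{ND_electrostatics}) to an infinitesimal ``pillbox'' straddling the plane. Before doing so, I would first simplify using the decomposition $\textbf{E}=\textbf{E}_++\textbf{E}_-$ from \eqref{main_field_bridge}: since $\textbf{E}_-$ is generated by the charge $q^-$ supported entirely on $z=L$, its superposition integral \eqref{ND_field_continious} has an integrand that is smooth in $\widetilde{\textbf{x}}$ for $z$ near $0$ (the denominator stays bounded away from zero). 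Hence $\textbf{E}_-$ is continuous across $z=0$, its $z$-limits from both sides coincide, and it cancels in the difference, so $E_z^+(\widetilde{\textbf{x}})-E_z^-(\widetilde{\textbf{x}})=E_{+,z}^+(\widetilde{\textbf{x}})-E_{+,z}^-(\widetilde{\textbf{x}})$. Thus the jump is produced solely by the local positive charge.

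Next I would set up a small $(D+1)$-dimensional Gaussian pillbox centered at $\widetilde{\textbf{x}}=(\textbf{x},0)$, with two flat $D$-dimensional faces of small cross-sectional area $A$ lying in the planes $z=+\varepsilon$ and $z=-\varepsilon$, joined by a lateral surface of height $2\varepsilon$. By Gauss's theorem the total outward flux equals the enclosed charge. The flux through the top face (outward normal $+\mathbf{e}_z$) tends to $E_z^+(\widetilde{\textbf{x}})\,A$ as $\varepsilon\to 0^+$, the flux through the bottom face (outward normal $-\mathbf{e}_z$) tends to $-E_z^-(\widetilde{\textbf{x}})\,A$, and the flux through the lateral surface vanishes as $\varepsilon\to 0$ because its area is proportional to $\varepsilon$ while the field stays bounded there. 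The enclosed charge is $\int \mathbb{P}(\textbf{x})\delta(z)\,d\widetilde{\textbf{x}}$ over the box, which the $\delta(z)$ collapses to $\int_A \mathbb{P}\,d\textbf{x}\approx \mathbb{P}(\widetilde{\textbf{x}})\,A$ for small $A$ (only $q^+$ is enclosed; $q^-$ sits far away at $z=L$). Equating flux and charge gives $\bigl(E_z^+(\widetilde{\textbf{x}})-E_z^-(\widetilde{\textbf{x}})\bigr)A=\mathbb{P}(\widetilde{\textbf{x}})\,A$, and dividing by $A$ and letting $A\to 0$ yields the claim.

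The main obstacle is the analytic justification of the two limiting steps, i.e.\ that the one-sided normal limits $E_z^\pm$ exist and that the face fluxes converge to $E_z^\pm\,A$ while the lateral flux genuinely vanishes; this is exactly where the compact-support hypothesis on $\mathbb{P}$ and a mild regularity (continuity) of the surface density are used, so that the enclosed charge is well approximated by $\mathbb{P}(\widetilde{\textbf{x}})\,A$ and the order of the limits $\varepsilon\to0$ and $A\to0$ can be controlled. The decomposition step above makes this cleaner, since it isolates the potentially singular contribution $\textbf{E}_+$ from the smooth part $\textbf{E}_-$; one could alternatively invoke the flux-conservation reasoning of Lemma~\ref{lemma_constant_flux} to argue that no flux leaks through the shrinking lateral boundary. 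The remaining computation is the routine evaluation of the pillbox fluxes described above.
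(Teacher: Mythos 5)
Your proposal is correct and follows essentially the same route as the paper: an infinitesimal Gaussian pillbox straddling the charged plane, with the lateral flux vanishing as the height shrinks and Gauss's theorem equating the remaining face fluxes $(E_z^+ - E_z^-)\,dS$ to the enclosed charge $\mathbb{P}\,dS$. Your preliminary observation that $\textbf{E}_-$ is continuous across $z=0$ and therefore cancels in the jump is a harmless extra refinement that the paper omits.
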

\begin{proof}
Consider an infinitesimal volume $\mathbf{dS}$ is the support of $\mathbb{P}$. Consider a closed surface, a cylinder with infinitesimal indentation in different directions in the plane $z=0$, see Fig. \ref{fig:IFM_app1_Corollary}.

\begin{figure}[ht]\centering
\includegraphics[width=60mm]{IFM_app1_Corollary.png}
\caption{The considered area.}
\label{fig:IFM_app1_Corollary}
\end{figure}

    The flux through this surface consists of three summands: $d\Phi^+$, the flux in the positive direction of the $z$-axis;  $d\Phi^-$, the flux in the negative direction of the $z$-axis; and $d\Phi_{\text{lat}}$, the flux through the lateral surface:
    \begin{equation}
        d\Phi_{\text{full}} = d\Phi^++d\Phi^-+d\Phi_{\text{lat}} = E_z^+dS-E^-_zdS+0.
    \end{equation}

    Here $d\Phi_{\text{lat}}=0$ since the height of the cylinder under consideration can be made as small as one wants (infinitesimal of higher order than $dS$). In turn, $d\Phi^-=-E^-_zdS$ has a negative sign due to the fact that the normal to the closed surface is directed outward, i.e. in the opposite direction from the axis $z$. Then, due to the Gauss's theorem (\ref{3D_gauss}):

    \begin{equation}
        d\Phi_{\text{full}} = (E_z^+-E^-_z)dS = \mathbb{P}dS
    \end{equation}
    which proves the lemma.
    \end{proof}

 Let $\{\textbf{x}^+_{i}\}_{i=1}^n$ be a sample of points distributed over $\mathbb{P}$. Let $dn$ be the number of points from the sample that fall in the volume $dS$ in the support of $\mathbb{P}$. Then $dn = dn_F+dn_B$, where $dn_F$ is the number of points in $dS$ that correspond to the mapping $T_F$ (that is, the movement along forward oriented lines; see (\ref{map_T})), and $dn_B$ corresponds to $T_B$.

\begin{lemma}[First lemma on the flow]
\label{first_flow_lemma}
    Let the field near the element $dS$ have different signs on different sides: $E_z^+>0$ and $E_z^-<0$. Then:
\begin{equation}
    \begin{split}
        & \frac{dn_F}{n}\xrightarrow[n\rightarrow\infty]{\text{a.s.}}E_z^+dS = d\Phi_F, \\
        & \frac{dn_B}{n}\xrightarrow[n\rightarrow\infty]{\text{a.s.}}|E_z^-|dS= d\Phi_B,
    \end{split}
\end{equation}
\end{lemma}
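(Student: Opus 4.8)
\textbf{Proof proposal for Lemma \ref{first_flow_lemma}.}

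The plan is to connect the local counting of sample points to the local field flux by combining Lemma \ref{corollary_flux} with the definition of the probability $\mu(\textbf{x}^+)$ in \eqref{mu_def}. The key observation is that when $E_z^+>0$ and $E_z^-<0$, the field emanates from the plate $z=0$ in \emph{both} the forward ($z>0$) and backward ($z<0$) directions, and the flux carried in each direction is precisely what distributes the sample points between the forward map $T_F$ and the backward map $T_B$. First I would invoke Lemma \ref{corollary_flux} to write $E_z^+ - E_z^- = \mathbb{P}(\widetilde{\textbf{x}})$; since $E_z^->0$ is false here (we have $E_z^-<0$), this reads $E_z^+ + |E_z^-| = \mathbb{P}(\widetilde{\textbf{x}})$, so the total flux leaving both faces of the infinitesimal cylinder equals the probability density. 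This identity is the anchor: it says the total outward flux density at the point equals $\mathbb{P}$.

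Next I would translate the empirical point count into flux. Since the $n$ points are i.i.d.\ from $\mathbb{P}$, the law of large numbers gives $dn/n \xrightarrow{\text{a.s.}} \mathbb{P}(\widetilde{\textbf{x}})\,dS = (E_z^+ + |E_z^-|)\,dS$ as $n\to\infty$. It remains to split $dn = dn_F + dn_B$ in the correct proportion. By the definition of the transport map $T$ in \eqref{map_T}, each point $\textbf{x}^+$ is routed to $T_F$ with probability $\mu(\textbf{x}^+)$ and to $T_B$ with probability $1-\mu(\textbf{x}^+)$; and in the present case ($E_z^\pm$ of opposite signs) \eqref{mu_def} gives $\mu = E_z^+/(E_z^+ + |E_z^-|)$. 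Conditioning on the $dn$ points landing in $dS$, the counts $dn_F$ and $dn_B$ are (conditionally) binomial with these success probabilities, so a second application of the law of large numbers yields
\begin{equation}
\frac{dn_F}{n} = \frac{dn_F}{dn}\cdot\frac{dn}{n} \xrightarrow{\text{a.s.}} \mu\cdot\mathbb{P}\,dS = \frac{E_z^+}{E_z^+ + |E_z^-|}\,(E_z^+ + |E_z^-|)\,dS = E_z^+\,dS,
\end{equation}
and symmetrically $dn_B/n \xrightarrow{\text{a.s.}} (1-\mu)\,\mathbb{P}\,dS = |E_z^-|\,dS$. Identifying $E_z^+\,dS = d\Phi_F$ and $|E_z^-|\,dS = d\Phi_B$ as the forward and backward flux elements completes the argument.

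The main obstacle I anticipate is making the infinitesimal counting argument rigorous rather than heuristic: one must be careful that the almost-sure convergence statements for $dn/n$ and for the split $dn_F/dn$ hold \emph{simultaneously} and are compatible with the $dS\to 0$ limit. The cleanest way to handle this is to fix the volume element $dS$ first, apply the law of large numbers to the fixed event ``point falls in $dS$ and is routed forward'' (whose probability is exactly $\mu\,\mathbb{P}\,dS$ by construction), obtaining a.s.\ convergence for each fixed $dS$, and only then interpret the ratio in the $dS\to 0$ limit as the flux density. A secondary subtlety is justifying that $\mu(\textbf{x}^+)$ as defined truly equals the fraction of flux directed forward; this is exactly the content of Lemma \ref{corollary_flux} together with the sign hypothesis, so no independent work is needed there.
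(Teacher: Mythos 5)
Your proposal is correct and follows essentially the same route as the paper: both apply the multiplication rule (probability of routing to $T_F$ via $\mu$ from \eqref{mu_def} times the probability of landing in $dS$), use Lemma \ref{corollary_flux} with the sign hypothesis to identify $\mathbb{P}\,dS=(E_z^++|E_z^-|)\,dS$, and invoke the law of large numbers to get the almost-sure limit $E_z^+\,dS=d\Phi_F$ (and symmetrically for $dn_B$). Your added remarks on fixing $dS$ before taking limits only make the paper's argument slightly more careful, not different.
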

where $\left(...\xrightarrow[n\rightarrow\infty]{\text{a.s.}}...\right)$ denotes the almost sure convergence, and $d\Phi_{F,B}$ denotes the electric field flux near the plate surface in the forward and backward directions.
\begin{proof}
    According to the multiplication rule of probability and the law of large numbers:
    \begin{equation}
    \begin{split}
        & \frac{dn_F}{n}\xrightarrow[n\rightarrow\infty]{\text{a.s.}}(\text{probability of choosing }T_F)\cdot(\text{probability of falling in }dS) = \\
        & = \mu(\widetilde{\textbf{x}})\cdot \mathbb{P}(\widetilde{\textbf{x}})dS = \frac{E_z^+}{E_z^++|E_z^-|}\cdot(E_z^++|E_z^-|)dS = E_z^+dS = d\Phi_F.\\
    \end{split}
    \end{equation}
    In the second equality, the definitions of probability $\mu(\cdot)$, see  (\ref{mu_def}), and Lemma \ref{corollary_flux} are used. The case $dn_B$ is proved similarly.
\end{proof}

\begin{lemma}[Second lemma on the flow]
\label{second_flow_lemma}
    Let $\mathbb{P}, \mathbb{Q}, \{\textbf{x}^+_{i}\}_{i=1}^n, dS, dn$ have the same meaning as in Lemma \ref{first_flow_lemma}. Let $E_z^+$ and $E_z^-$ have the same sign near $dS$ (i.e., $E_z^\pm>0$). Then
    \begin{equation}
        \frac{dn}{n} \xrightarrow[n\rightarrow\infty]{\text{a.s.}} d\Phi_{\text{after}} - d\Phi_{\text{before}},
    \end{equation}
    where $d\Phi_{\text{before}}$ is the field flux through the current tube supported on $dS$ immediately before crossing the plane $dS \in \text{supp}\,\mathbb{P}(\cdot)$, and $d\Phi_{\text{after}}$ is the flux after crossing.
\end{lemma}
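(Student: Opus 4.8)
The plan is to reduce the claim to the strong law of large numbers together with Lemma~\ref{corollary_flux}. The quantity $dn/n$ is nothing but the empirical fraction of the i.i.d.\ sample $\{\textbf{x}^+_{i}\}_{i=1}^n$ that lands in the infinitesimal element $dS\subset\text{supp}(\mathbb{P})$. Since the points are drawn i.i.d.\ from $\mathbb{P}$, the indicator of landing in $dS$ has expectation $\mathbb{P}(\widetilde{\textbf{x}})dS$, so the strong law of large numbers immediately gives
\begin{equation}
\frac{dn}{n}\xrightarrow[n\rightarrow\infty]{\text{a.s.}}\mathbb{P}(\widetilde{\textbf{x}})\,dS.
\end{equation}
I would also note that in the same-sign regime $E_z^\pm>0$ we have $\mu(\widetilde{\textbf{x}})=1$ by (\ref{mu_def}), so every point in $dS$ is routed through $T_F$ and $dn=dn_B+dn_F=dn_F$; there is thus no backward contribution that must be treated separately, in contrast to Lemma~\ref{first_flow_lemma}.

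The second step is to identify the two fluxes in the statement with the left and right limits of the field. I would take the stream tube generated by the boundary of $dS$ and consider its cross-sections at $z=-\varepsilon$ and $z=+\varepsilon$ as $\varepsilon\to0^+$. To leading order each cross-section has area $dS$, and the flux through it in the direction of flow equals $E_z^-\,dS$just below the plate and $E_z^+\,dS$ just above it; these are exactly $d\Phi_{\text{before}}$ and $d\Phi_{\text{after}}$. Subtracting,
\begin{equation}
d\Phi_{\text{after}}-d\Phi_{\text{before}}=\big(E_z^+-E_z^-\big)\,dS.
\end{equation}

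Finally, applying Lemma~\ref{corollary_flux}, which yields $E_z^+-E_z^-=\mathbb{P}(\widetilde{\textbf{x}})$ on $\text{supp}(\mathbb{P})$, turns the right-hand side into $\mathbb{P}(\widetilde{\textbf{x}})\,dS$, matching the almost-sure limit of $dn/n$ from the first step and closing the argument. The main obstacle is not any heavy computation but the careful bookkeeping in the second step: one must justify that in the infinitesimal limit both tube cross-sections collapse to $dS$ with the correct orientation, so that the signed flux difference equals the enclosed charge (and is not contaminated by lateral flux or an erroneous factor). This is precisely the pillbox computation already performed in Lemma~\ref{corollary_flux}; the conceptual content of the present lemma is then the interpretation of that enclosed charge as the flux of the \emph{newly originated} field lines — the lines corresponding to sample points that start on this portion of the plate and move forward, rather than the pass-through lines carrying the pre-existing flux $d\Phi_{\text{before}}$.
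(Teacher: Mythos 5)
Your proposal is correct and follows essentially the same route as the paper's own proof: apply the strong law of large numbers to get $dn/n \to \mathbb{P}(\widetilde{\textbf{x}})\,dS$, observe that $\mu=1$ in the same-sign regime so all points are forward-routed, and then convert $\mathbb{P}(\widetilde{\textbf{x}})\,dS$ into $(E_z^+-E_z^-)\,dS = d\Phi_{\text{after}}-d\Phi_{\text{before}}$ via Lemma~\ref{corollary_flux}. Your added care in identifying the two tube cross-sections with the left and right limits of $E_z$ is a slightly more explicit version of what the paper leaves to ``the definition of flux,'' but the argument is the same.
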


\begin{proof}
    
    By the probability product theorem, the strong law of large numbers, Lemma \ref{corollary_flux}, and the definition of flux:
    
    \begin{equation}
        \frac{dn}{n} \xrightarrow[n\rightarrow\infty]{\text{a.s.}} \mu(\mathbf{x})\mathbb{P}(\mathbf{x})dS = 1 \cdot \mathbb{P}(\mathbf{x})dS = (E_z^+ - E_z^-)dS = d\Phi_{\text{after}} - d\Phi_{\text{before}}.
    \end{equation}
\end{proof}

\textbf{Remark.} This statement implies that when the field crosses the plane $\mathbb{P}$ containing a charge (proportional to $dn/n$), the field flux must \textit{increase} by $dn/n$.

Lemmas \ref{first_flow_lemma} and \ref{second_flow_lemma} address the behavior near the distribution $\mathbb{P}$. Similar statements are valid for the behavior near $\mathbb{Q}$. When moving along field lines, we eventually reach the plane $z = L$. At this point $\widetilde{\mathbf{x}}^-$, two different scenarios may occur:

\begin{enumerate}
    \item $E_z^+(\widetilde{\mathbf{x}}^-)$ and $E_z^-(\widetilde{\mathbf{x}}^-)$ have opposite signs. Then the field line motion terminates in this case.
    \item $E_z^+(\widetilde{\mathbf{x}}^-)$ and $E_z^-(\widetilde{\mathbf{x}}^-)$ have the same sign. Then some number $dn'\leq dn$ of lines inside current tube (corresponding to sample $\{\mathbf{x}_i^+\}_{i=1}^n\sim\mathbb{P}$) must terminate, while the others should continue moving.
\end{enumerate}

This portion $dn'$ can be found from the line termination property in $\mathbb{Q}$.

\begin{lemma}[Line Termination]
\label{lemma_termination}
    If $E_z^+(\widetilde{\mathbf{x}}^-)$ and $E_z^-(\widetilde{\mathbf{x}}^-)$ have the same sign upon crossing $z = L$, the number of lines terminating on $z = L$ satisfies:

\begin{equation}
    \frac{dn'}{n} \xrightarrow[n\rightarrow\infty]{\text{a.s.}} -d\Phi_{\text{after}} + d\Phi_{\text{before}},
\end{equation}
\end{lemma}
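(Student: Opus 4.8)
The plan is to establish this statement as the mirror image of Lemmas~\ref{corollary_flux}, \ref{first_flow_lemma}, and \ref{second_flow_lemma}, now read off at the target plate $z=L$ instead of at the source plate $z=0$, and then to tie the empirical line count $dn'/n$ to a difference of continuous fluxes via flux conservation and the strong law of large numbers.

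First I would prove the analog of Lemma~\ref{corollary_flux} at $z=L$. Since the charge density carried by that plate is $q^-(\textbf{x}^-) = -\mathbb{Q}(\textbf{x}^-)$, applying Gauss's theorem to an infinitesimal pillbox straddling the plane $z=L$ (exactly as in the proof of Lemma~\ref{corollary_flux}, with the lateral flux vanishing) yields the jump relation
\begin{equation}
    E_z^{\text{before}}(\widetilde{\textbf{x}}^-) - E_z^{\text{after}}(\widetilde{\textbf{x}}^-) = \mathbb{Q}(\widetilde{\textbf{x}}^-),
\end{equation}
where $E_z^{\text{before}}$ and $E_z^{\text{after}}$ are the left ($z=L-\varepsilon$) and right ($z=L+\varepsilon$) limits of $E_z$ at $\widetilde{\textbf{x}}^-$. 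Multiplying by the cross-sectional area $dS$ converts this into the flux identity $d\Phi_{\text{before}} - d\Phi_{\text{after}} = \mathbb{Q}(\widetilde{\textbf{x}}^-)\,dS$, which is exactly the (signed) charge deposited on the patch $dS$.

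Next I would relate the discrete counts to these fluxes. Because the slab $0<z<L$ contains no charge, every stream tube carries a conserved flux by Lemma~\ref{lemma_constant_flux}; hence the flux a bundle of lines delivers to the left face of the plate equals the flux it started with near $\mathbb{P}$. Following the counting argument of Lemmas~\ref{first_flow_lemma}--\ref{second_flow_lemma} — the product rule for the selection probabilities together with the strong law of large numbers — the fraction of sample lines entering the tube satisfies $dn/n \xrightarrow[n\rightarrow\infty]{\text{a.s.}} d\Phi_{\text{before}}$, while the sub-bundle that crosses into $z>L$ forms its own charge-free stream tube of flux $d\Phi_{\text{after}}$, so that $(dn-dn')/n \xrightarrow[n\rightarrow\infty]{\text{a.s.}} d\Phi_{\text{after}}$. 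The fraction that stops rather than continues is precisely the probability $1-\nu(\textbf{x}_F^-)$ from \eqref{nu_def}, which is designed to reproduce this flux split in the same-sign case.

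Subtracting the two limits then gives
\begin{equation}
    \frac{dn'}{n} = \frac{dn}{n} - \frac{dn-dn'}{n} \xrightarrow[n\rightarrow\infty]{\text{a.s.}} d\Phi_{\text{before}} - d\Phi_{\text{after}} = -d\Phi_{\text{after}} + d\Phi_{\text{before}},
\end{equation}
which is the claim; combining it with the pillbox identity above confirms $dn'/n \to \mathbb{Q}(\widetilde{\textbf{x}}^-)\,dS$, i.e.\ the terminating lines deposit exactly the charge sitting on the plate. The main obstacle I anticipate is not the algebra but making the correspondence between the discrete line count and the continuous flux fully rigorous: one must justify the almost-sure convergence carefully enough to apply it on an infinitesimal tube, verify that the selection rule $\nu$ truly realizes the required flux fraction when $E_z^\pm$ share a sign, and keep the sign bookkeeping of the outward pillbox normals consistent so that ``before'' and ``after'' align with the physical direction of motion along the lines.
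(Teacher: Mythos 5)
Your overall route is the same as the paper's: the paper likewise establishes $dn_{\text{before}}/n \xrightarrow[n\to\infty]{\text{a.s.}} d\Phi_{\text{before}}$ from Lemma~\ref{first_flow_lemma} together with flux conservation (Lemma~\ref{lemma_constant_flux}), then applies the law of large numbers to the stochastic termination decision, $dn'/n \to (\text{termination probability})\cdot d\Phi_{\text{before}}$, and substitutes the definition of that probability to land on $(E_z^- - E_z^+)\,dS' = d\Phi_{\text{before}} - d\Phi_{\text{after}}$. Your preliminary pillbox computation at $z=L$ (the analog of Lemma~\ref{corollary_flux} with charge $-\mathbb{Q}$, giving $E_z^- - E_z^+ = \mathbb{Q}$) is correct and consistent with how the lemma is used afterwards, even though the paper does not restate it inside this proof.

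However, you have the termination probability backwards, and that is precisely the step on which the lemma hinges. By \eqref{nu_def} and \eqref{fwd-map-def}, $\nu(\mathbf{x}_F^-)$ is the probability of \emph{stopping} at the first crossing and $1-\nu(\mathbf{x}_F^-)$ is the probability of \emph{continuing} into $z>L$; in the same-sign case $\nu = (E_z^- - E_z^+)/E_z^-$ and $1-\nu = E_z^+/E_z^-$. You assert that ``the fraction that stops rather than continues is precisely the probability $1-\nu(\mathbf{x}_F^-)$.'' Taken literally this yields $dn'/n \to (1-\nu)\,d\Phi_{\text{before}} = (E_z^+/E_z^-)\,E_z^-\,dS' = d\Phi_{\text{after}}$, contradicting both the lemma and your own final display. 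The correct bookkeeping is $dn'/n \to \nu\, d\Phi_{\text{before}} = (E_z^- - E_z^+)\,dS' = d\Phi_{\text{before}} - d\Phi_{\text{after}}$, while the continuing sub-bundle carries $(1-\nu)\,d\Phi_{\text{before}} = d\Phi_{\text{after}}$, which is what lets it match the flux of the charge-free tube in $z>L$. Relatedly, your step ``$(dn-dn')/n \to d\Phi_{\text{after}}$ because the continuing lines form a charge-free stream tube of flux $d\Phi_{\text{after}}$'' is not by itself a justification: the flux of that tube is a property of the field, whereas the number of \emph{sample} lines inside it is dictated by the random termination rule, so the identification holds only because $\nu$ is calibrated exactly to produce this split. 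With the roles of $\nu$ and $1-\nu$ restored, your argument coincides with the paper's.
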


\textbf{Remark.} When the field crosses the plane $\mathbb{Q}$ containing a charge (proportional to $dn'/n$), the field flux must \textit{decrease} by $ dn'/n$.

\begin{proof}
    Consider the current tube before it intersects the plane $z=L$. Let us denote the number of lines inside $dn_{\text{before}}$. As a result of the intersection $z=L$, some of the lines $dn'$ stop moving, while some of the lines $dn_{\text{after}}$ continue moving. In view of the first Lemma \ref{first_flow_lemma} on flow, as well as the conservation of flow inside the current tube (Lemma \ref{lemma_constant_flux}):

    \begin{equation}
        \frac{dn_{\text{before}}}{n}\xrightarrow[n\rightarrow\infty]{\text{a.s.}}d\Phi_\text{before}.
    \end{equation}

    Then, by virtue of the law of large numbers and the fact that $dn_{\text{before}} = dn'+dn_{\text{after}}$, we have:

    \begin{equation}
    \begin{split}
        & \frac{dn'}{n}\xrightarrow[n\rightarrow\infty]{\text{a.s.}}(\text{probability of termination})\cdot\frac{dn_{\text{before}}}{n}\xrightarrow[n\rightarrow\infty]{\text{a.s.}}\nu(\textbf{x}^-)\cdot d\Phi_\text{before} \\
        & \frac{E_z^- - E_z^+}{E_z^-}\cdot E_z^-dS' = (E_z^- - E_z^+)dS' = -d\Phi_{\text{after}} + d\Phi_{\text{before}}
    \end{split}
    \end{equation}
    
\end{proof}

We now proceed to prove the main theorem.

\begin{theorem}[\textbf{Electrostatic Field Matching}]
    Let $\mathbb{P}(\textbf{x}^+)$ and $\mathbb{Q}(\textbf{x}^-)$ be two data distributions that have compact support. Let $\textbf{x}^+$ be distributed as $\mathbb{P}(\textbf{x}^+)$. Then $\textbf{x}^-=T(\textbf{x}^+)$ is distributed as $\mathbb{Q}(\textbf{x}^-)$ almost surely:
    \begin{equation}
        \text{If}\;\; \textbf{x}^+\sim \mathbb{P}(\textbf{x}^+) \Rightarrow T(\textbf{x}^+) = \textbf{x}^-\sim \mathbb{Q}(\textbf{x}^-).
    \end{equation}
\end{theorem}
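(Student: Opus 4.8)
The plan is to convert the measure-transport claim $T_\#\mathbb{P}=\mathbb{Q}$ into a statement about conservation of electric flux along stream tubes, exploiting the fact that the stochastic map $T$ was defined precisely so that its endpoints are distributed according to the flux carried by the field lines. Concretely, I would draw an i.i.d.\ sample $\{\textbf{x}^+_i\}_{i=1}^n\sim\mathbb{P}$, fix an arbitrary infinitesimal patch $dS'$ in $\text{supp}(\mathbb{Q})$ on the plane $z=L$, and show that the fraction $dn'/n$ of sample points whose image $T(\textbf{x}^+_i)$ lands in $dS'$ converges almost surely to $\mathbb{Q}(\textbf{x}^-)\,dS'$. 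Integrating this local identity over the support of $\mathbb{Q}$ and invoking the strong law of large numbers then yields $T_\#\mathbb{P}=\mathbb{Q}$ almost surely, since the empirical measure of the endpoints converges to the measure with density $\mathbb{Q}$.

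The engine of the argument is flux bookkeeping along a stream tube, carried out in three stages. First, near the source plate I use Lemmas~\ref{first_flow_lemma} and~\ref{second_flow_lemma}: by Lemma~\ref{corollary_flux} the charge in a patch $dS$ injects net flux $\mathbb{P}(\textbf{x}^+)\,dS=(E_z^+-E_z^-)\,dS$ into the lines, and the definition of $\mu$ in \eqref{mu_def} splits this injected flux between forward- and backward-oriented lines in exact proportion to $E_z^+$ and $|E_z^-|$ in the opposite-sign case, or sends all of it forward in the same-sign case. Second, in the charge-free slab $0<z<L$ the flux inside each stream tube is conserved by Lemma~\ref{lemma_constant_flux}, so the flux leaving $dS$ is transported faithfully to the patch $dS'$ that the tube reaches on $z=L$; because lines cannot terminate in empty space (Corollary~\ref{corollary_no_drop_in_empty_space}), cannot form loops (Lemma~\ref{no_loops_lemma}), and because the lines escaping to infinity form a measure-zero set (Lemma~\ref{lines_main_lemma}), this transport is an almost-surely well-defined correspondence of flux. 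Third, at the target plate the $\mathbb{Q}$-analog of Lemma~\ref{corollary_flux} says that the net flux absorbed by the charge in $dS'$ equals $\mathbb{Q}(\textbf{x}^-)\,dS'$, and Lemma~\ref{lemma_termination} together with the definition of $\nu$ in \eqref{nu_def} guarantees that the number of lines actually stopping at $dS'$ (rather than crossing into $z>L$) matches exactly the absorbed flux $-d\Phi_{\text{after}}+d\Phi_{\text{before}}$.

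Assembling the three stages, the chain of almost-sure limits gives $dn'/n\to\mathbb{Q}(\textbf{x}^-)\,dS'$, the desired local identity, and summing over a partition of $\text{supp}(\mathbb{Q})$---with the forward and backward contributions combined through the definition~\eqref{map_T} of $T$---completes the proof. The step I expect to be the main obstacle is the second case of the forward map, in which a line crosses $z=L$ into the region $z>L$ and must eventually return: one has to verify that applying the continuation probability $1-\nu$ recursively reassigns exactly the right amount of flux, so that no mass is lost or double-counted, and that these excursions into $z>L$ together with the backward-oriented lines indeed recover the full support of $\mathbb{Q}$. The other delicate point is making the informal ``number of lines equals flux'' correspondence fully rigorous---justifying that the almost-sure limits of the discrete counts $dn, dn_F, dn_B, dn'$ can be integrated into a genuine statement about the pushforward measure---and this is where the compact-support hypothesis and the measure-zero escape set of Lemma~\ref{lines_main_lemma} are essential.
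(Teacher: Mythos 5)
Your proposal follows essentially the same route as the paper's proof: an empirical-measure argument showing $dn'/n \to \mathbb{Q}(\mathbf{x}^-)\,dS'$ almost surely via flux bookkeeping along stream tubes, using exactly Lemmas~\ref{corollary_flux}, \ref{first_flow_lemma}, \ref{second_flow_lemma}, \ref{lemma_termination}, \ref{lemma_constant_flux} and \ref{lines_main_lemma}, with the forward/backward split $dn'=dn'_F+dn'_B$. The one step you flag as the main obstacle --- lines crossing $z=L$ into $z>L$ and returning --- is resolved in the paper by tracking all intersection points $\mathbf{x}_0\to\cdots\to\mathbf{x}_{N+1}$ with the planes and telescoping the signed flux sum $\sum_k(-1)^{f_k}dn_k=0$ using $d\Phi_{\text{after},k}=d\Phi_{\text{before},k-1}$, which is precisely the ``recursive reassignment of flux'' you anticipated.
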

\begin{proof}

\begin{enumerate}
    \item Let $\{\mathbf{x}_i^+\}_{i=1}^n$ be a sample of points distributed according to $\mathbb{P}(\cdot)$. Applying the mapping $T(\cdot)$, we obtain the corresponding points on the target distribution: $\{\mathbf{x}_i^-\}_{i=1}^n = \{T(\mathbf{x}_i^+)\}_{i=1}^n$.
    
    \item Consider a $D$-dimensional area element $dS'$ on $\mathbb{Q}(\cdot)$. Let $dn'$ denote the number of points $\mathbf{x}_i^-$ within this region. We define the empirical distribution:
    \begin{equation}
        \hat{\mathbb{Q}}_n(\mathbf{x}^-) dS' = \frac{dn'}{n}.
    \end{equation}
    Our goal is to prove:
    \begin{equation}
        \hat{\mathbb{Q}}_n(\cdot) \xrightarrow[n\rightarrow\infty]{\text{almost surely}} \mathbb{Q}(\cdot).
    \end{equation}

    \item The $dn'$ points arrive at $\mathbb{Q}(\cdot)$ through two pathways (Fig. \ref{proof1}), forward-oriented ($dn'_F$) and backward-oriented ($dn'_B$) trajectories:
    \begin{equation}
        dn' = dn'_F + dn'_B.
    \end{equation}


    \begin{figure}[h]
\vskip 0.2in
\begin{center}
\centerline{\includegraphics[width=50mm]{proof1.png}}
\caption{An element of volume $dS'$ is selected on the distribution $\mathbb{Q}(\cdot)$. $dn'$ is the number of points $x_i^-$ falling into this volume.  Some points came to the distribution $\mathbb{Q}(\cdot)$ from the front (denote them by $dn'_F$), and some came from the back ($dn'_B$).}
\label{proof1}
\end{center}
\vskip -0.2in
\end{figure}

    \item Consider the current tube terminating at $dS'$ corresponding to forward-oriented arrivals ($dn_F'$). Since $\mathbb{P}(\cdot)$ and $\mathbb{Q}(\cdot)$ have compact supports and the total charge related to the system of the two plates is zero, by Lemma~\ref{lines_main_lemma}, these tubes must start from $\mathbb{P}(\cdot)$ almost surely. 
    \item  During the motion against field lines from $\mathbb{Q}$ to $\mathbb{P}$, multiple crossings of $z=0$ and/or $z=L$ may occur ($N=0, 1 \text{ or } 2$, see Fig. \ref{proof2}). Denote the intersection points:
    \begin{equation}
        \mathbf{x}^- = \mathbf{x}_0 \to \mathbf{x}_1 \to \cdots \to \mathbf{x}_{N} \to \mathbf{x}_{N+1} = \mathbf{x}^+.
    \end{equation}

    \begin{figure}[h]
\vskip 0.2in
\begin{center}
\centerline{\includegraphics[width=\columnwidth]{proofEFM2.png}}
\caption{Intersection points.}
\label{proof2}
\end{center}
\vskip -0.2in
\end{figure}
    
    Their corresponding area elements are
    
    \begin{equation}
        dS' = dS_0 \to dS_1 \to \cdots \to dS_{N} \to dS_{N+1} = dS.
    \end{equation}
    
    Point counts in these areas are
    
    \begin{equation}
        dn' = dn_0 \to dn_1 \to \cdots \to dn_{N} \to dn_{N+1} = dn,
    \end{equation}

    where $dn_k$ ($k = 0, ..., N+1$) is number of points from sample $\{\mathbf{x}_{i}\}_{i=1}^n$ or from map $\{T(\mathbf{x}_{i})\}_{i=1}^n$ inside the volume $dS_k$ near point $\mathbf{x}_k$ that corresponds to considered motion inside current tube.
    
    \item The $dn_k$ are not arbitrary but related by flux conservation. Only the charged planes ($z=0$ or $z=L$) can alter the count:
    
    \begin{itemize}
        \item At $z_k=0$: Line count increases by $dn_k$.
        \item At $z_k=L$: Line count decreases by $dn_k$.
    \end{itemize}
    
    It can be written mathematically as:
    \begin{equation}
        \sum_{k=0}^{N+1} (-1)^{f_k}dn_k = 0,
    \end{equation}
    
    where
    
    \begin{equation}
        f_k = \begin{cases}
            0 & \text{if } z_k=0, \\
            1 & \text{if } z_k=L.
        \end{cases}
    \end{equation}

    \item  Due to the first Lemma \ref{first_flow_lemma} on flow:

\begin{equation}
    \frac{dn_{N+1}}{n}\equiv\frac{dn}{n}\xrightarrow[n\rightarrow\infty]{\text{a.s.}} d\Phi_{N+1}\equiv d\Phi,
\end{equation}

    \item  Due to the second Lemma \ref{second_flow_lemma} on the flow, and because of the line termination Lemma \ref{lemma_termination}:
\begin{equation}
    (-1)^{f_k}\cdot\frac{dn_k}{n}\xrightarrow[n\rightarrow\infty]{\text{a.s.}} d\Phi_{\text{after},k}- d\Phi_{\text{before},k}.
\end{equation}

    \item  According to the law of conservation of flux along the tube (Lemma \ref{lemma_constant_flux}) from $\mathbf{x}_{i}$ to $\mathbf{x}_{i-1}$:

\begin{equation}
d\Phi_{\text{after},k} = d\Phi_{\text{before},k-1}.
\end{equation}

    \item  Whence we obtain a chain of equalities:
\begin{equation}
    \begin{split}
        & 0 = \sum_{k=0}^{N+1} (-1)^{f_k}dn_k = -\frac{dn'_F}{n}+ (-1)^{f_1}\frac{dn_1}{n}+...+(-1)^{f_{N}}\frac{dn_{N}}{n}+\frac{dn}{n}\Rightarrow\\
        & \frac{dn'_F}{n} = (-1)^{f_1}\frac{dn_1}{n}+...+(-1)^{f_{N}}\frac{dn_{N}}{n}+\frac{dn}{n} \xrightarrow[n\rightarrow\infty]{\text{a.s.}}\\
        & \xrightarrow[n\rightarrow\infty]{\text{a.s.}}  -d\Phi_{\text{after},1} - d\Phi_{\text{before},1} + ... + d\Phi_{\text{after},N} - d\Phi_{\text{before},N} + d\Phi_{N+1} =\\
        & =d\Phi_{\text{after},1} + 0 + ... + 0 = d\Phi'_F.
    \end{split}
\end{equation}

Consequently,

\begin{equation}
    \frac{dn'_F}{n}\xrightarrow[n\rightarrow\infty]{\text{a.s.}}d\Phi'_F.
\end{equation}

Note that this property is not obvious in general, since the points $dn'$ are obtained by applying the mapping $T(\cdot)$ to a certain \textit{sample} of points, while the flux $d\Phi'$ is determined by the \textit{true} value of the field, which is defined by the entire distribution of charges $\mathbb{P}$ and $\mathbb{Q}$.

Similarly, it can be proven that
\begin{equation}
    \frac{dn'_B}{n}\xrightarrow[n\rightarrow\infty]{\text{a.s.}}d\Phi'_B.
\end{equation}
    \item  Then, by virtue of the Gauss's theorem (\ref{3D_gauss}), we finally have
\begin{equation}
    \hat{\mathbb{Q}}_ndS' = \frac{dn'}{n} = \frac{dn'_F}{n} + \frac{dn'_B}{n}\xrightarrow[n\rightarrow\infty]{\text{a.s.}}\frac{d\Phi'_F}{\Phi_0}+\frac{d\Phi'_B}{\Phi_0} = \mathbb{Q}dS.
\end{equation}
This completes the proof.

\end{enumerate}

\end{proof}

\section{Extended discussion of limitations}
\label{sec-limitation-extra}


\subsection{The problem of choosing the hyperparameter $L$}
\label{app:chossing_of_L}
The interplate distance L between the left and right plates of the capacitor is the key tuning hyperparameter that influences the performance. In the \wasyparagraph\ref{3d_exp}, we provide an experiment on Swiss roll dataset that demonstrates the impact of L on the performance (Fig. \ref{ris:image1}). With neural-network-based approaches, the electric field becomes less recoverable as the hyperparameter increases as, informally speaking, the required training volume also grows. 

We also carry out another example to show the influence of the hyperparameter $L$ in noise-to-image generation CIFAR-10 experiment (Fig. \ref{fig:inter-plate}).  It can be seen that increasing $L$ up to the value of $5000$ leads to a significant decrease in the generation quality. At the same time, no significant differences are observed for the average values of $L=50, 500$. The theoretical ideal $L^\star$ that minimizes the approximation error remains unknown and represents an important direction for future work. Hovewer, usually we choose $L$ of an order comparable to the standard deviation of the data distributions.


\begin{figure}[!h]
\begin{subfigure}[b]{0.33\linewidth}
\centering
\includegraphics[width=0.995\linewidth]{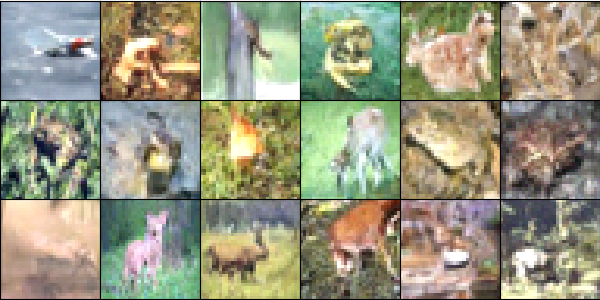}
\caption{\textcolor{black}{\centering\scriptsize $L=50$}}
\label{fig:generation-cifar-small}
\end{subfigure}
\begin{subfigure}[b]{0.33\linewidth}
\centering
\includegraphics[width=0.995\linewidth]{EFM_CIFAR10_32x32_1.png}
\caption{\textcolor{black}{\centering\scriptsize $L=500$}}
\label{fig:generation-cifar-norm}
\end{subfigure}
\begin{subfigure}[b]{0.33\linewidth}
\centering
\includegraphics[width=0.995\linewidth]{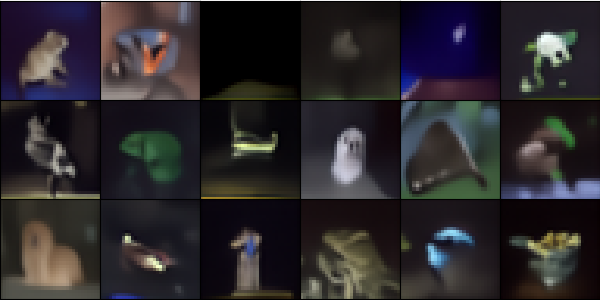}
\caption{\textcolor{black}{\centering\scriptsize $L=5 000$}}
\label{fig:generation-cifar-big}
\end{subfigure}
\caption{\textcolor{black}{\centering\small\textit{The influence of the \textbf{interplate distance} $L$ on the performance of the Noise-to-Image generation of CIFAR-10 dataset}. Pictures EFM method with $L=50$ (Fig. \ref{fig:generation-cifar-small}), $L=500$ (Fig. \ref{fig:generation-cifar-norm}) as well as $L=5000$  (Fig. \ref{fig:generation-cifar-big}) are presented.}}
\vspace{-2mm}
\label{fig:inter-plate}
\end{figure}

\subsection{The problem of lines going beyond $z=L$}
\label{app:lines_z_more_L}

When moving from $\mathbb{P}$ to $\mathbb{Q}$ along the field lines, we eventually reach the plane $z = L$.  Two different scenarios may occur:

\begin{enumerate}
    \item $E_z^+(L)$ and $E_z^-(L)$ have opposite signs. The field line motion terminates in this case.
    \item $E_z^+(L)$ and $E_z^-(L)$ have the same sign. Then, the movement should end at this point with probability $\nu(\mathbf{x}^-)$, and with probability $1-\nu(\mathbf{x}^-)$, the movement should continue (recall (\ref{nu_def})).
\end{enumerate}

Overall, one needs a stochastic choice for terminating and/or continuing the movement. To make this choice, it is necessary to calculate the value of the field $E_z^\pm(L)$ to the left and right of the plate $z=L$ at the point of interest and calculate $\nu(\mathbf{x}^-)$.









Suppose that the situation of continuing movement is realized. This motion should be performed by integrating $d\mathbf{x}(t) = \mathbf{E}(\mathbf{x}(t))dt$ until the line is on the target distribution again. Note that such a motion should indeed end on the distribution $\mathbb{Q} $ (Lemma \ref{lines_main_lemma}), and also that further movement is impossible (since $E_z^+(L)<0$, and between plates $z\in(0,L)$ we always have $E_z>0$, in particular $E_z^-(L)>0$, i.e., the first situation is realized). An example of considered problem is shown in Fig. \ref{fig:Dippole}. Note that in practice, the field is calculated from a sample of data (rather than from a continuous distribution), so it is difficult to calculate $E_z^\pm$ accurately. Therefore, in practice we always stop the movement at the first intersection.

\begin{figure}[!h]
\centering
\includegraphics[width=0.395\linewidth]{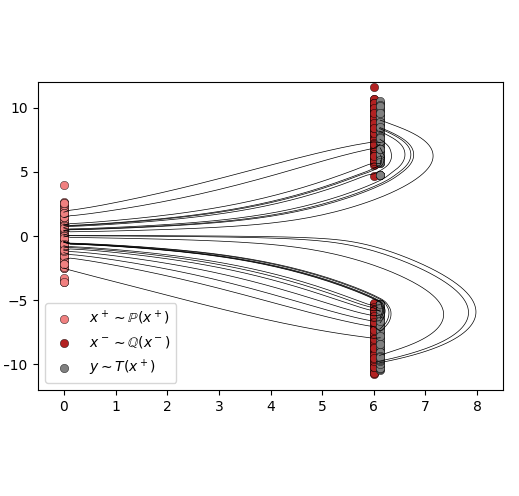}
\label{fig:dipole2d}
\vspace{-6mm}
\caption{A toy experiment Gaussian$\to 2$Gaussians. The two Gaussians are significantly separated from each other, so a large number of lines flying out of the region $z=L$ is observed. Further integration along the field lines still leads to the target distribution.}
\label{fig:Dippole}
\end{figure}

\subsection{Backward-oriented field lines}
\label{app:backward_lines}

Transport between plates can occur via two distinct trajectory classes (Fig.~\ref{appendixE1}):  
\begin{itemize}  
    \item \textbf{Forward-oriented trajectories} (red lines): Initial motion directed toward the target plate  
    \item \textbf{Backward-oriented trajectories} (black lines): Initial motion receding from the target plate  
\end{itemize}  

Note that due to the Lemma \ref{lines_main_lemma}, both series must end at the second distribution. The difference between the two series is the greater curvature of the backward-oriented series compared to the forward-oriented one. From the practical point of view, using backward-oriented series is less useful than forward-oriented. This is not only due to the curvature of the lines, but also due to the significantly larger training volume ($V_{\text{backward}}$ shaded in black in the figure is much larger than red $V_{\text{forward}}$). 

We also note another feature related to the training volume. In Fig. \ref{appendixE1}, we can see that lines starting at the periphery of the data distribution are particularly curved, and may partially leave the training volume (points $A,B,C,D,E,F,G,H$).

\begin{figure}[!h]
\captionsetup[subfigure]{position=bottom, margin=0pt, skip=2pt}
\centering
\begin{subfigure}[t]{0.45\linewidth}
\centering
\includegraphics[width=0.85\linewidth]{appendixE1.png}
\caption{\color{black}Two series of field lines: Forward-directed (red lines), and backward-directed (blue lines). Blue lines require much larger training volume $V_{\text{backward}}$ than red lines, which take up $V_{\text{forward}}$. The figure also shows the field lines starting from the peripheral distribution points that can leave the training volume (points $A,B,C,D,E,F,G,H$).\color{black}}
\label{appendixE1}
\end{subfigure}
\hspace{10mm}
\begin{subfigure}[t]{0.45\linewidth}
\centering
\includegraphics[width=0.85\linewidth]{appendixE2.png}
\caption{\color{black}Two series of field lines: forward-directed (red lines), and backward-directed (blue lines). It can be seen that forward-oriented (red) lines starting at the distribution $\mathbb{P}(\cdot)$ may not completely cover the target distribution $\mathbb{Q}(\cdot)$. \color{black}}
\label{appendixE2}
\end{subfigure}
\caption{An illustration of forward-oriented and backward-oriented lines.}
\end{figure}

When source ($\mathbb{P}$) and target ($\mathbb{Q}$) distributions exhibit significant mean shifts ($\mu_{\mathbb{P}} \neq \mu_{\mathbb{Q}}$), the number of forward-oriented lines extending beyond the $z=L$ boundary can be quite large. In such a situation, there appears a region on the second distribution that is not covered by forward-oriented lines (in Fig. \ref{appendixE2} red lines do not completely cover the second distribution).


\subsection{Training Volume Selection}  
\label{app:training_volume}


Our electrostatic framework allows flexible training volume definitions between plates. The (\ref{middle_point_sample}) formula is not the only possible training volume option (different from the flow matching, which uses this particular interpolant for loss construction). In Fig. \ref{fig:bs}, a comparison of the two approaches is shown: 

\begin{itemize}
    \item Linear interpolation between samples of  $\mathbb{P}$ and $\mathbb{Q}$ using  (\ref{middle_point_sample})
    \item Uniform cube mesh between plates.
\end{itemize}

Figure~\ref{fig:bs} showcases the equivalent performance for both approaches in Gaussian-to-Swiss-roll transport.

 \begin{figure}[h]
 \centering
\begin{subfigure}[b]{0.45\linewidth}
\centering
\includegraphics[width=0.995\linewidth]{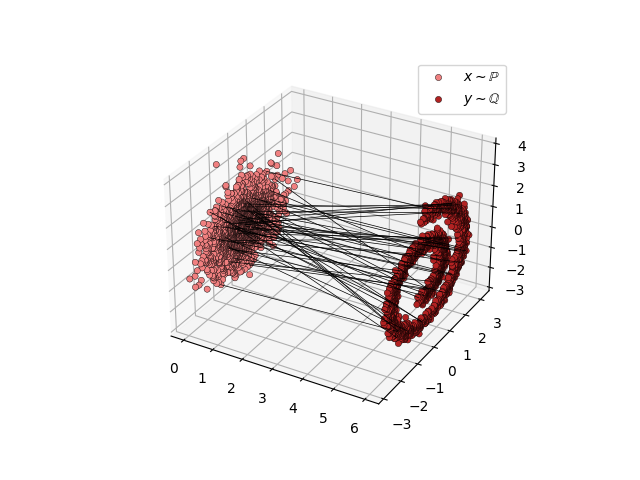}
\caption{\textcolor{black}{\centering Training volume for \textbf{our} EFM method that is defined by interpolation from  \eqref{middle_point_sample}.}}
\label{fig:interpolation-2d}
\end{subfigure}
\hspace{5mm}
\begin{subfigure}[b]{0.45\linewidth}
\centering
\includegraphics[width=0.995\linewidth]{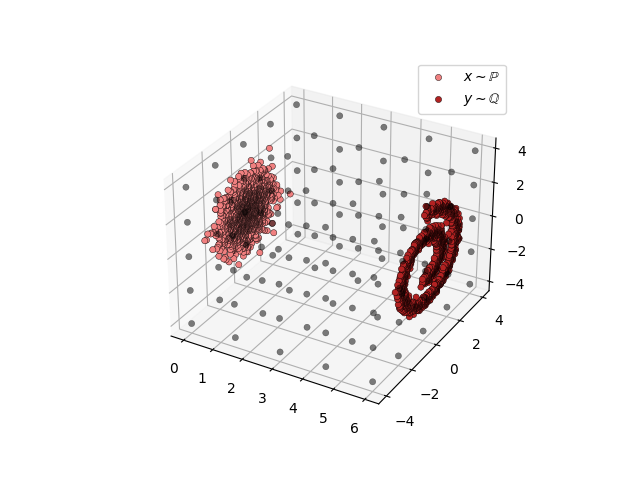}
\caption{\textcolor{black}{\centering Training volume for \textbf{our} EFM method that is defined by uniform cube mesh between plates.}}
\label{fig:cube-mesh}
\end{subfigure}

\vspace{-3mm}
\begin{subfigure}[b]{0.45\linewidth}
\centering
\includegraphics[width=0.995\linewidth]{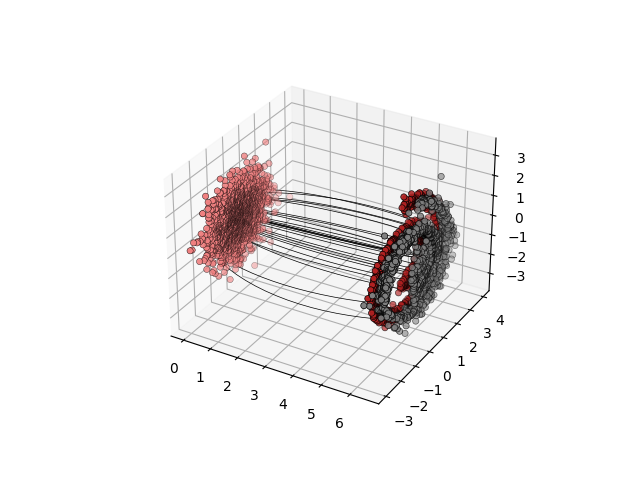}
\caption{\textcolor{black}{\centering The learned trajectories of \textbf{our} EFM with training volume defined in \ref{middle_point_sample}.} }
\label{fig:interpolation-traj}
\end{subfigure}
\hspace{5mm}
\begin{subfigure}[b]{0.45\linewidth}
\centering
\includegraphics[width=0.995\linewidth]{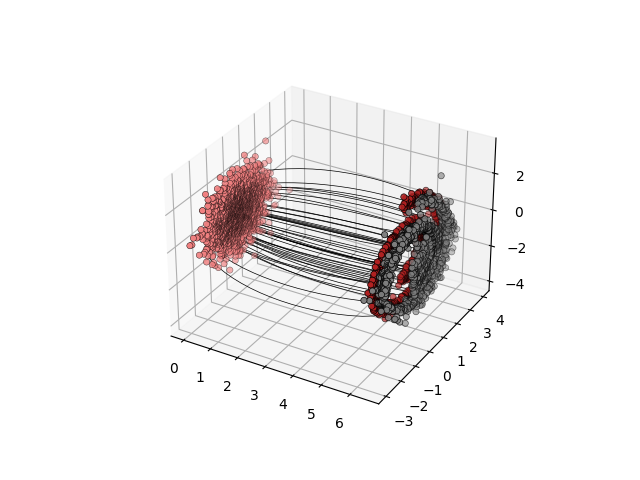}
\caption{\textcolor{black}{\centering The learned trajectories of \textbf{our} EFM  with training volume defined by the cube mesh.}}
\label{fig:cube-traj}
\end{subfigure}
\caption{\textcolor{black}{\centering Learned trajectories of \textbf{our} EFM method with different training volumes on \textit{Gaussian-to-Swiss roll experiment}.}}
\vspace{-2mm}
\label{fig:bs}
\end{figure}

\color{black}

\section{Experimental details}
\label{app:experiments_details}

We aggregate the hyper-parameters of our Algorithm~\ref{algorithm:EFM} for different experiments in the Table \ref{table:hyperparams}. We base our code for the experiments on PFGM's code \url{https://github.com/Newbeeer/Poisson_flow}.

\textbf{Toy experiments.} In the 2D illustrative example (\S\ref{3d_exp}), we make the inference by constructing the ODE Euler solver for the equation \ref{main_odde} with the iterative scheme (see Alg.\ref{algorithm:EFM sampling})

\textbf{Image data experiments.} In the case of the Image experiments (\S\ref{transfer_exp} and \S\ref{generation_exp}),  we use the RK45 ODE solver provided by \url{https://docs.scipy.org/doc/scipy/reference/generated/scipy.integrate.RK45.html} for the inference process with the hyper-parameters rtol=1e-4 , atol=1e-4 and number of steps (NFE) equals to 100. Also, we use Exponential Moving Averaging (EMA) technique with the ema rate decay equals to 0.99 . As for the optimization procedure, we use Adam optimizer \cite{kingma2015adam} with the learning rate $\lambda=2e-4$ and weight decay equals to 1e-4. 

Evaluation of the training time for our solver on the image's experiments (see \S\ref{transfer_exp} and \S\ref{generation_exp})takes less than 10 hours on a single GPU GTX 1080ti (11 GB VRAM).

\begin{table}[!h]
\centering
\begin{tabular}{|l|l|l|l|l|l|l|l|}

\hline
Experiment            & D    & Batch Size   & $L$    & $NFE$,Num Steps & $\lambda$,LR   &  Weight Decay & $\sigma$ \\ \hline
Gaussian   Swiss-roll & 2    & 1024 & 6 & 20        & 2e-3 & 0.           & 0.001  \\ \hline
Colored MNIST Translation (3$\rightarrow$2)     & 3072 & 64   & 10   & 100       & 2e-4 & 1e-4         & 0.01  \\ \hline
Colored MNIST Generation                    & 3072 & 64   & 30   & 100       & 2e-4 & 1e-4         & 0.01\\ \hline
\textcolor{black}{CIFAR-10 Generation}                    & 3072 & 64   & 500   & 100       & 2e-4 & 1e-4         & 0.05\\ \hline
\end{tabular}
\caption{\centering\scriptsize Hyper-parameters of Algorithm~\ref{algorithm:EFM} for different experiments, where $D$ is the dimensionality of task, $L$ is the distance betwenn plates and $\sigma$ is used for the definition points between plates (see \S\ref{prac_implementation}).}
\label{table:hyperparams}
\end{table}

\textbf{Baselines.} We use the source code \url{https://github.com/Newbeeer/pfgmpp} for running \textbf{PFGM} in our experiments. We found the following values of hyper parameters are appropriate for us: $\gamma=5, t=0.3, \varepsilon = 1e-3$, see \cite{xu2022poissonflowgenerativemodels} for details. Also, we utilize the source code of Flow Matching (\textbf{FM}) from the github page \url{https://github.com/atong01/conditional-flow-matching/tree/main} for experiments in \S\ref{transfer_exp}. We also add extra baselines: \textbf{Cycle-GAN} from the github page \url{https://github.com/junyanz/pytorch-CycleGAN-and-pix2pix}, \textbf{DDIB} from \url{https://github.com/suxuann/ddib} and $\textbf{DSBM}$ from \url{https://github.com/yuyang-shi/dsbm-pytorch}. Their results are shown in Figure \ref{fig:add-trans}.

\begin{figure}[!h]
\begin{subfigure}[b]{0.33\linewidth}
\centering
\includegraphics[width=0.995\linewidth]{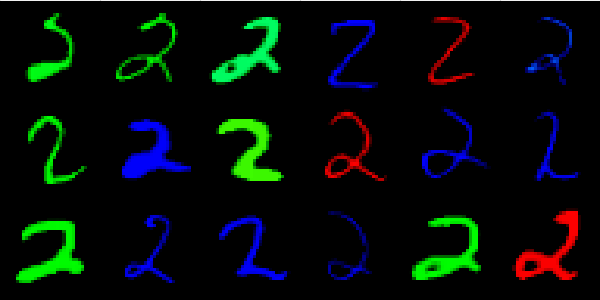}
\caption{\centering\scriptsize Samples from \textbf{Cycle-GAN} approximation of $\mathbb{Q}(\textbf{x}^{-})$ }
\label{fig:cyclegan}
\end{subfigure}
\begin{subfigure}[b]{0.33\linewidth}
\centering
\includegraphics[width=0.995\linewidth]{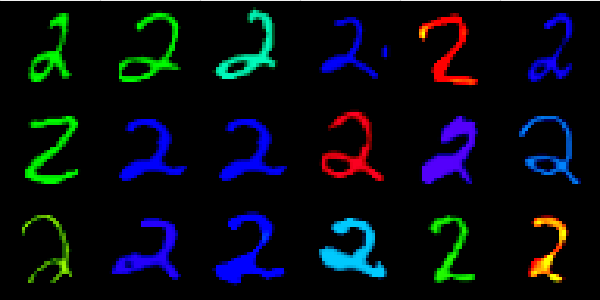}
\caption{\centering\scriptsize Samples from \textbf{DDIB} approximation of $\mathbb{Q}(\textbf{x}^{-})$}
\label{fig:ddib}
\end{subfigure}
\begin{subfigure}[b]{0.33\linewidth}
\centering
\includegraphics[width=0.995\linewidth]{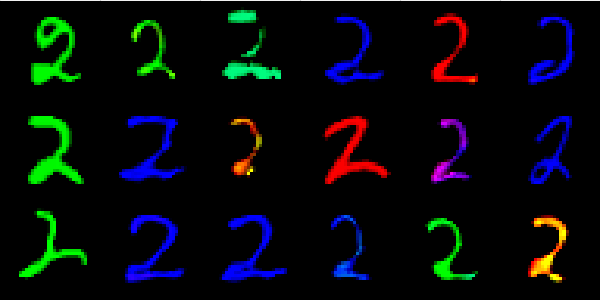}
\caption{\centering\scriptsize Samples from \textbf{DSBM} approximation of $\mathbb{Q}(\textbf{x}^{-})$ }
\label{fig:asbm}
\end{subfigure}
\caption{\textcolor{black}{\centering\small\textit{Image-to-Image translation} experiment (Colored MNIST dataset, $3\rightarrow 2$). The results of alternative translation methods: \textbf{Cycle-GAN} \citep{zhu2017unpaired}, \textbf{DDIB} \citep{sudual2023} and \textbf{DSBM} \citep{de2024schrodinger}}}
\label{fig:add-trans}
\end{figure}


\end{document}